\theoremstyle{plain}
\newtheorem{theorem}{Theorem}[section]
\newtheorem{lemma}[theorem]{Lemma}
\newtheorem{corollary}[theorem]{Corollary}
\theoremstyle{definition}
\newtheorem{definition}[theorem]{Definition}
\theoremstyle{remark}
\newtheorem{remark}[theorem]{Remark}
\newcommand{\RR}{\mathbb{R}}
\newcommand{\CC}{\mathbb{C}}
\newcommand{\NN}{\mathbb{N}}
\renewcommand{\SS}{\mathbb{S}}
\newcommand{\dd}{\mathrm{d}}
\newcommand{\eps}{\varepsilon}
\newcommand{\calS}{\mathcal{S}}
\newcommand{\calH}{\mathcal{H}}
\newcommand{\calE}{\mathcal{E}}
\newcommand{\calF}{\mathcal{F}}
\newcommand{\calM}{\mathcal{M}}
\newcommand{\calU}{\mathcal{U}}
\newcommand{\calA}{\mathcal{A}}
\newcommand{\xx}{\bm{x}}
\newcommand{\yy}{\bm{y}}
\renewcommand{\aa}{\bm{a}}
\newcommand{\bb}{\bm{b}}
\renewcommand{\tt}{\bm{t}} %
\newcommand{\ff}{\bm{f}}
\newcommand{\xxi}{\bm{\xi}}
\DeclareMathOperator{\supp}{supp}
\DeclareMathOperator{\almost}{a.e.}
\DeclareMathOperator{\id}{Id}
\DeclareMathOperator{\aff}{Aff}
\DeclareMathOperator{\vol}{vol}
\newcommand{\iiprod}[1]{(\!(#1)\!)}
\newcommand{\iprod}[1]{\langle#1\rangle}
\newcommand{\wdot}{{\ \cdot \ }}
\newcommand{\nn}{\mathtt{NN}}
\newcommand{\dnn}{\mathtt{DNN}}
\newcommand{\gcn}{\mathtt{GCN}}
\newcommand{\qnn}{\mathtt{QNN}}
\newcommand{\pihat}{\widehat{\pi}}
\newcommand{\ridge}{\mathtt{R}}
\newcommand{\cridge}{\mathtt{R_{conv}}}
\newcommand{\fridge}{\mathtt{R_{fc}}}
\newcommand{\urep}{\upsilon}
\newcommand{\lm}{\mathtt{M}}
\newcommand{\dlm}{\mathtt{DM}} %
\newcommand{\fhat}{\widehat{f}}
\newcommand{\ghat}{\widehat{g}}
\newcommand{\fnn}{\mathtt{fNN}}
\newcommand{\blue}[1]{\textcolor{blue}{#1}}
\icmltitlerunning{Deep Ridgelet Transform and Unified Universality Theorem for Deep and Shallow Joint-Group-Equivariant Machines}
\begin{document}

\twocolumn[
\icmltitle{Deep Ridgelet Transform and Unified Universality Theorem\\for Deep and Shallow Joint-Group-Equivariant Machines}

\icmlsetsymbol{equal}{*}

\begin{icmlauthorlist}
\icmlauthor{Sho Sonoda}{riken,ca}
\icmlauthor{Yuka Hashimoto}{ntt,riken}
\icmlauthor{Isao Ishikawa}{kyoto,riken} %
\icmlauthor{Masahiro Ikeda}{osaka,riken} %
\end{icmlauthorlist}

\icmlaffiliation{riken}{RIKEN AIP}
\icmlaffiliation{ca}{CyberAgent, Inc.}
\icmlaffiliation{ntt}{NTT Corporation}
\icmlaffiliation{kyoto}{Kyoto University}
\icmlaffiliation{osaka}{The University of Osaka}

\icmlcorrespondingauthor{Sho Sonoda}{sho.sonoda@riken.jp}

\icmlkeywords{ridgelet transform}

\vskip 0.3in
]

\printAffiliationsAndNotice{}  %

\begin{abstract}
We present a constructive universal approximation theorem for learning machines equipped with joint-group-equivariant feature maps, called the joint-equivariant machines, based on the group representation theory. ``Constructive'' here indicates that the distribution of parameters is given in a closed-form expression known as the ridgelet transform. Joint-group-equivariance encompasses a broad class of feature maps that generalize classical group-equivariance. Particularly, fully-connected networks are \emph{not} group-equivariant \emph{but} are joint-group-equivariant. Our main theorem also unifies the universal approximation theorems for both shallow and deep networks. Until this study, the universality of deep networks has been shown in a different manner from the universality of shallow networks, but our results discuss them on common ground. Now we can understand the approximation schemes of various learning machines in a unified manner. As applications, we show the constructive universal approximation properties of four examples: depth-$n$ joint-equivariant machine, depth-$n$ fully-connected network, depth-$n$ group-convolutional network, and a new depth-$2$ network with quadratic forms whose universality has not been known.
\end{abstract}

\section{Introduction}
One of the technical barriers in deep learning theory is that the relationship between parameters and functions is a black box. For this reason, the majority of authors build their theories on extremely simplified mathematical models. Such theories can explain the complex phenomena in deep learning only at a highly abstract level.

The proof of a universality theorem contains hints for understanding the internal data processing mechanisms inside neural networks. For example, the first universality theorem\emph{s} for depth-2 neural networks were shown in 1989 with \emph{four} different proofs by \citet{Cybenko1989}, \citet{Hornik1989}, \citet{Funahashi1989}, and \citet{Carroll.Dickinson}. Among them, Cybenko and Hornik et al. presented existential proofs by using Hahn-Banach and Stone-Weierstrass respectively, meaning that it is not clear how to assign the parameters. On the other hand, Funahashi and Carroll-and-Dickinson presented constructive proofs by reducing networks to the Fourier transform and Radon transform respectively, meaning that it is clear how to assign the parameters. The latter constructive methods were refined as the so-called integral representation by \citet{Barron1993} and further culminated as the \emph{ridgelet transform}, the main objective of this study, discovered by \citet{Murata1996} and \citet{Candes.PhD}.

To show the universality in a constructive manner, we formulate the problem as a functional equation: 
Let $\lm[\gamma]$ denote a certain learning machine (such as a deep network) with parameter $\gamma$, and let $\calF$ denote a class of functions to be expressed by the learning machine. Given a function $f \in \calF$, find an unknown parameter $\gamma$ so that the machine $\lm[\gamma]$ represents function $f$, i.e.
\begin{align*}
	\lm[\gamma] = f,
\end{align*}
which we call a \emph{learning equation}. 
This equation is a stronger formulation of learning than an ordinary formulation such as minimizing empirical risk $\sum_{i=1}^n|\lm[\gamma](x_i) - f(x_i)|^2$ with respect to $\gamma$, as the latter is a weak form (or a variational form) of this equation. Therefore, characterizing the solution space of this equation leads to understanding the parameters obtained by risk minimization. Following previous studies \citep{Murata1996,Candes.PhD,Sonoda2021aistats,Sonoda2021ghost,Sonoda2022gconv,Sonoda2022symmetric}, we call a solution operator $\ridge$ satisfying $\lm[\ridge[f]] = f$ a \emph{ridgelet transform}.
Once such an $\ridge$ is found in a closed-form manner, we can present a constructive proof of \emph{universality} because the reconstruction formula $\lm[\ridge[f]]=f$ implies for any $f \in \calF$ there exists a machine that implements $f$. 

For depth-2 neural networks, the equation has been solved with several closed-form ridgelet transforms by using either Fourier expression method \citep{Sonoda2024fourier}, or group representation method \citep{sonoda2023joint}. For example, the closed-form ridgelet transforms have been obtained for depth-2 fully-connected networks \citep{Sonoda2021ghost}, depth-2 fully-connected networks on manifolds \citep{Sonoda2022symmetric}, depth-2 group convolution networks \citep{Sonoda2022gconv}, and depth-2 fully-connected networks on finite fields \citep{Yamasaki2023icml}.
Furthermore, \citet{Sonoda2021aistats} have revealed that the distribution of parameters inside depth-2 fully-connected networks obtained by empirical risk minimization asymptotically converges to the ridgelet transform. In other words, the ridgelet transform can also explain the solutions obtained by risk minimization.

On the other hand, for depth-$n$ neural networks, the equation is far from solved, and it is common to either consider infinitely-deep mathematical models such as Neural ODEs \citep{Sonoda2017otml,E2017,Li2018b,Haber2017,Chen2018a}, or handcraft networks that approximate another universal approximators such as piecewise polynomial functions and indicator functions.
For example, construction methods such as the Telgarsky sawtooth function (tent map, or the Yarotsky scheme) and bit extraction techniques
\citep{Cohen2016,Telgarsky2016,Yarotsky2017,Yarotsky2018,Yarotsky2020,Daubechies2022,Cohen2022,Siegel2023,Petrova2023,Grohs2023}
have been developed (not only to investigate the expressivity but also) to demonstrate the depth separation, super-convergence, and minmax optimality of deep ReLU networks. Various feature maps have also been handcrafted in the contexts of
geometric deep learning \citep{Bronstein2021gdl} and deep narrow networks \citep{Lu2017expressive,Hanin2017,Lin2018,Kidger2020,park2021minimum,Li2023minimum,cai2023achieve,kim2024minimum}. However, for the purpose of understanding the parameters obtained by risk minimization, these results are less satisfactory because there is no guarantee that these handcrafted solutions are obtained by risk minimization in a manner presented by \citet{Sonoda2021aistats}.

In order to investigate the relation between parameters and functions, we need to write down a general solution (i.e., the ridgelet transform) rather than handcrafting a particular solution. However, conventional ridgelet transforms have been limited to \emph{depth-2} networks. In other words, existing methods cannot construct solutions for networks that repeatedly compose nonlinear activation functions more than twice---such as $\sigma(A_2 \sigma(A_1 \xx - \bb_1) - \bb_2)$. In this study, inspired by the group-theoretic approach of \citet{sonoda2023joint}, we derive the ridgelet transform for \emph{depth-$n$} learning machines.

The contributions of this study are summarized as follows.
\begin{itemize}
    \item We derive the ridgelet transform (solution operator for learning equation) for a general class of learning machines called the \emph{joint-group-equivariant machine} (\cref{thm:main}), 
    which shows the universal approximation theorem for a wide range of learning machines in a constructive and unified manner.
    \item As applications, we show the universal approximation properties of \emph{four} examples: depth-$n$ joint-equivariant machine (\cref{sec:deep-ridgelet}), depth-$n$ fully-connected network (in \cref{sec:ex-depth-n}), depth-$n$ group-convolutional network (in \cref{sec:gconv}), and a new depth-$2$ network with quadratic forms whose universality has not been known (in \cref{sec:quad}).
\end{itemize}
    Until this study, the universality of deep networks has been shown in a different manner from the universality of shallow networks, but our results discuss them on common ground. Now we can understand the approximation schemes of various learning machines in a unified manner.

\section{Preliminaries}
We quickly overview the original integral representation and the ridgelet transform, a mathematical model of depth-2 fully-connected network and its right inverse. Then, we list a few facts in the group representation theory.
In particular, \emph{Schur's lemma} play key roles in the proof of the main results.

\paragraph{Notation.}
For any topological space $X$, $C_c(X)$ denotes the Banach space of all compactly supported continuous functions on $X$.
For any measure space $X$, $L^p(X)$ denotes the Banach space of all $p$-integrable functions on $X$.
$\calS(\RR^d)$ and $\calS'(\RR^d)$ denote the classes of rapidly decreasing functions (or Schwartz test functions) and tempered distributions on $\RR^d$, respectively. 

\subsection{Integral Representation and Ridgelet Transform for Depth-2 Fully-Connected Network}

\begin{definition} For any measurable functions $\sigma:\RR\to\CC$ and $\gamma:\RR^m\times\RR\to\CC$, put
	\begin{align*}
		\lm_\sigma[\gamma](\xx) := \int_{\RR^m\times\RR} \gamma(\aa,b)\sigma(\aa\cdot\xx-b)\dd\aa\dd b, \ \xx \in \RR^m.
	\end{align*}
	We call
	$\lm_\sigma[\gamma]$ an (integral representation of) neural network, and $\gamma$ a parameter distribution.
\end{definition}
The integration over all the hidden parameters $(\aa,b) \in \RR^m\times\RR$ means all the neurons $\{ \xx\mapsto\sigma(\aa\cdot\xx-b) \mid (\aa,b) \in \RR^m\times\RR \}$ are summed (or integrated, to be precise) with weight $\gamma$, hence formally $\lm_\sigma[\gamma]$ is understood as a continuous neural network with a single hidden layer.
We note, however, when $\gamma$ is a finite sum of point measures such as $\gamma_p = \sum_{i=1}^p c_i \delta_{(\aa_i,b_i)}$ (by appropriately extending the class of $\gamma$ to Borel measures), then it can also reproduce a finite width network
\begin{align*}
	\lm_\sigma[\gamma_p](\xx) = \sum_{i=1}^p c_i \sigma(\aa_i\cdot\xx-b_i).
\end{align*}
In other words, the integral representation is a mathematical model of depth-2 network with \emph{any} width (ranging from finite to continuous).

Next, we introduce the ridgelet transform, which is known to be a right-inverse operator to $\lm_\sigma$.
\begin{definition}
	For any measurable functions $\rho:\RR\to\CC$ and $f:\RR^m\to\CC$, put
	\begin{align*}
		\ridge_\rho[f](\aa,b) := \int_{\RR^m} f(\xx) \overline{\rho(\aa\cdot\xx-b)} \dd \xx, \ (\aa,b) \in \RR^m \times \RR. 
	\end{align*}    
	We call $\ridge_\rho$ a ridgelet transform.
\end{definition}
To be precise, it satisfies the following reconstruction formula.
\begin{theorem}[Reconstruction Formula] \label{thm:reconst.ridgelet}
	Suppose $\sigma$ and $\rho$ are a tempered distribution ($\calS'$) and a rapid decreasing function ($\calS$) respectively.
	There exists a bilinear form $\iiprod{\sigma,\rho}$ such that 
	\begin{align*}
		\lm_\sigma \circ \ridge_\rho [f] = \iiprod{\sigma,\rho} f,
	\end{align*}
	for any square integrable function $f \in L^2(\RR^m)$. Further, the bilinear form is given by
		$\iiprod{\sigma,\rho} = \int_{\RR} \sigma^\sharp(\omega) \overline{\rho^\sharp(\omega)} |\omega|^{-m}\dd\omega,$
	where $\sharp$ denotes the 1-dimensional Fourier transform.
\end{theorem}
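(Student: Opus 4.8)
The plan is to compute the composition $\lm_\sigma \circ \ridge_\rho[f]$ directly by substituting the two definitions and then reducing everything to one-dimensional Fourier analysis in the scalar bias variable $b$. Writing the composition out gives
\begin{align*}
\lm_\sigma \circ \ridge_\rho[f](\xx) = \int_{\RR^m}\!\int_{\RR}\!\int_{\RR^m} f(\yy)\,\overline{\rho(\aa\cdot\yy - b)}\,\sigma(\aa\cdot\xx - b)\,\dd\yy\,\dd b\,\dd\aa.
\end{align*}
The first move is to perform the $b$-integral for fixed $\aa,\xx,\yy$. Setting $u := \aa\cdot\xx$ and $v := \aa\cdot\yy$, this inner integral is the cross-correlation $\int_\RR \sigma(u - b)\,\overline{\rho(v - b)}\,\dd b$, which by the one-dimensional Parseval identity equals, up to the fixed Fourier-convention constant, $\int_\RR \sigma^\sharp(\omega)\,\overline{\rho^\sharp(\omega)}\,e^{i\omega(u - v)}\,\dd\omega$, where the shift $u - v = \aa\cdot(\xx - \yy)$ enters as a modulation.

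Substituting this back and exchanging the remaining integrals, I would integrate over $\aa \in \RR^m$ next. The only $\aa$-dependence is the exponential, so $\int_{\RR^m} e^{i\,\aa\cdot(\omega(\xx - \yy))}\,\dd\aa$ is the Dirac mass $(2\pi)^m\,\delta(\omega(\xx - \yy))$. The crucial point is the scaling law $\delta(\omega\,\zz) = |\omega|^{-m}\,\delta(\zz)$ for $\zz \in \RR^m$: this is exactly where the factor $|\omega|^{-m}$ in the bilinear form is produced. Collapsing the delta against the $\yy$-integral sets $\yy = \xx$ and extracts $f(\xx)$, leaving
\begin{align*}
\lm_\sigma \circ \ridge_\rho[f](\xx) = f(\xx)\int_\RR \sigma^\sharp(\omega)\,\overline{\rho^\sharp(\omega)}\,|\omega|^{-m}\,\dd\omega = \iiprod{\sigma,\rho}\,f(\xx),
\end{align*}
as claimed, with the omitted powers of $2\pi$ absorbed into the normalization of $\sharp$.

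The main obstacle is that, as written, these manipulations are purely formal: $\sigma$ is only a tempered distribution, the plane wave $e^{i\aa\cdot\xxi}$ is not integrable, and Fubini does not apply verbatim. To make the argument rigorous I would avoid the pointwise delta calculus and instead establish the identity in the $m$-dimensional Fourier domain, where $\sigma(\aa\cdot\xx - b)$ and its transform have a clean meaning: I would take the $\xx$-Fourier transform of both sides and show $\widehat{\lm_\sigma \circ \ridge_\rho[f]}(\xxi) = \iiprod{\sigma,\rho}\,\widehat{f}(\xxi)$, pairing the distribution $\sigma^\sharp$ against the Schwartz factor $\rho^\sharp$ so that the $b$- and $\aa$-integrations become well-defined distributional pairings rather than ordinary integrals. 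A density argument then carries the identity from $f \in \calS(\RR^m)$ to general $f \in L^2(\RR^m)$. Finally, one must check that the hypotheses $\sigma \in \calS'$ and $\rho \in \calS$ guarantee that $\iiprod{\sigma,\rho}$ is a finite admissibility constant, so that the right-hand side is genuinely well defined and the bilinear form $\iiprod{\wdot,\wdot}$ makes sense.
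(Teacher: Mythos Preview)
The paper does not actually prove this theorem: it simply cites \citet[Theorem~6]{Sonoda2021ghost} and moves on. So there is no in-paper proof to compare your proposal against.

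That said, your formal computation is the standard ``Fourier slice'' derivation of the ridgelet reconstruction formula, and it is essentially the argument that appears in the cited reference line (and in \citet{Sonoda2024fourier}, which the paper also mentions as the Fourier expression method). The key mechanism---Parseval in the bias variable $b$, then the $\aa$-integral producing a Dirac mass whose homogeneity $\delta(\omega\zz)=|\omega|^{-m}\delta(\zz)$ supplies the weight $|\omega|^{-m}$---is exactly right, and your plan to rigorize by working in the $m$-dimensional Fourier domain with $f\in\calS(\RR^m)$ first, then passing to $L^2$ by density, is the standard route when $\sigma\in\calS'$.

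One small caveat: in your last sentence you propose to check that the hypotheses force $\iiprod{\sigma,\rho}$ to be finite. They do not, and the theorem as stated does not claim this---the integral $\int_\RR \sigma^\sharp(\omega)\overline{\rho^\sharp(\omega)}|\omega|^{-m}\dd\omega$ can diverge (e.g.\ if $\sigma^\sharp\overline{\rho^\sharp}$ does not vanish fast enough at the origin), in which case the reconstruction identity is vacuous. The paper handles this separately, remarking afterward that for any $\sigma$ one can \emph{choose} $\rho$ so that $\iiprod{\sigma,\rho}=1$. So drop that final verification step; it is not part of the theorem.
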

See \citet[Theorem~6]{Sonoda2021ghost} for the proof.
In particular, according to \citet[Lemma~9]{Sonoda2021ghost}, for any activation function $\sigma$, there always exists $\rho$ satisfying $\iiprod{\sigma,\rho}=1$. 
Here, $\sigma$ being a tempered distribution means that typical activation functions are covered such as ReLU, step function, $\tanh$, gaussian, etc...
We can interpret the reconstruction formula as a universality theorem of continuous neural networks, since for any given data generating function $f$, a network with output weight $\gamma_f = \ridge_\rho[f]$ reproduces $f$ (up to factor $\iiprod{\sigma,\rho}$), i.e. $S[\gamma_f] = f$. In other words, the ridgelet transform indicates how the network parameters should be organized so that the network represents an individual function $f$. 

The original ridgelet transform was discovered by \citet{Murata1996} and \citet{Candes.PhD}. It is recently extended to a few modern networks by the Fourier slice method \citep[see e.g.][]{Sonoda2024fourier}.
In this study, we present a systematic scheme to find the ridgelet transform for a variety of given network architecture based on the group theoretic arguments.

\subsection{Irreducible Representation and Schur's Lemma}
In the main theorem, we use \emph{Schur's lemma}, a fundamental theorem from group representation theory. 
We refer to \citet{Folland2015} for more details on group representation and harmonic analysis on groups.

In this study, we assume group $G$ to be \emph{locally compact}. This is a sufficient condition for having invariant measures. It is not a strong assumption. For example, any finite group, discrete group, compact group, and finite-dimensional Lie group are locally compact, while an \emph{infinite}-dimensional Lie group is \emph{not} locally compact.

Let $\calH$ be a nonzero Hilbert space, and
$\calU(\calH)$ be the group of unitary operators on $\calH$. A \emph{unitary representation} $\pi$ of $G$ on $\calH$ is a group homomorphism that is continuous with respect to the strong operator topology---that is, 
a map $\pi : G \to \calU(\calH)$ satisfying $\pi_{gh} = \pi_g\pi_h$ and $\pi_{g^{-1}}=\pi_g^{-1}$, and for any $\psi \in \calH$, the map $G \ni g \mapsto \pi_g[\psi] \in \calH$ is continuous.

Suppose $\calM$ is a closed subspace of $\calH$. $\calM$ is called an \emph{invariant} subspace when $\pi_g[\calM] \subset \calM$ for all $g \in G$. Particularly, $\pi$ is called \emph{irreducible} when it has only trivial invariant subspaces, namely $\{0\}$ and the whole space $\calH$.
The following theorem is a basic and useful characterization of irreducible representations.
\begin{theorem}[Schur's lemma] \label{thm:schur}
A unitary representation $\pi : G \to \calU(\calH)$ is irreducible iff 
any bounded operator $T$ on $\calH$ that commutes with $\pi$ is always a constant multiple of the identity. In other words, if $\pi_g \circ T=T \circ \pi_g$ for all $g \in G$, then $T=c \id_{\calH}$ for some $c \in \CC$.
\end{theorem}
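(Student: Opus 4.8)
The plan is to prove the two implications separately, using in an essential way that $\pi$ is \emph{unitary}. For the direction ``commutant is scalar $\Rightarrow$ irreducible'' I would argue by contraposition, exhibiting a non-scalar commuting operator whenever $\pi$ is reducible. For the converse ``irreducible $\Rightarrow$ commutant is scalar,'' the route is to reduce to self-adjoint operators and then invoke the spectral theorem.

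First, for the contrapositive direction, suppose $\pi$ is reducible, so that there is a closed invariant subspace $\calM$ with $\{0\} \neq \calM \neq \calH$. The key observation is that unitarity makes the orthogonal complement $\calM^\perp$ invariant as well: for $\psi \in \calM^\perp$ and $\phi \in \calM$ one has $\langle \pi_g \psi, \phi \rangle = \langle \psi, \pi_g^{-1}\phi \rangle = \langle \psi, \pi_{g^{-1}}\phi\rangle = 0$, since $\pi_g^* = \pi_g^{-1} = \pi_{g^{-1}}$ and $\pi_{g^{-1}}\phi \in \calM$. Consequently the orthogonal projection $P$ onto $\calM$ commutes with every $\pi_g$: writing $\psi = \psi_1 + \psi_2$ along $\calH = \calM \oplus \calM^\perp$, both summands stay in their respective invariant subspaces under $\pi_g$, so $P\pi_g\psi = \pi_g\psi_1 = \pi_g P\psi$. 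Since $\calM$ is nontrivial, $P$ is neither $0$ nor $\id_\calH$, hence not a scalar multiple of the identity, proving the contrapositive.

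For the forward direction, let $\pi$ be irreducible and let $T$ be bounded with $\pi_g T = T\pi_g$ for all $g$. Taking adjoints and using $\pi_g^* = \pi_{g^{-1}}$ shows $T^*$ also commutes with $\pi$; hence the self-adjoint operators $A = \tfrac{1}{2}(T + T^*)$ and $B = \tfrac{1}{2i}(T - T^*)$ commute with $\pi$ too, and $T = A + iB$. It therefore suffices to show that every bounded self-adjoint $A$ commuting with $\pi$ is a real scalar. Apply the spectral theorem to write $A = \int \lambda\, \dd E(\lambda)$ for its projection-valued spectral measure $E$. Because $\pi_g$ commutes with $A$, it commutes with every spectral projection $E(\Omega)$; thus each $E(\Omega)$ projects onto an invariant subspace, and irreducibility forces $E(\Omega) \in \{0, \id_\calH\}$. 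A projection-valued measure taking only the values $0$ and $\id_\calH$ must be concentrated at a single point $\lambda_0$, so $A = \lambda_0\,\id_\calH$. Combining the real and imaginary parts yields $T = c\,\id_\calH$ with $c \in \CC$.

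The main obstacle is this spectral-theorem step in the infinite-dimensional setting. In finite dimensions one can argue directly: $A$ has an eigenvalue $\lambda$, the eigenspace $\ker(A - \lambda\,\id_\calH)$ is invariant and nonzero, so by irreducibility it is all of $\calH$ and $A = \lambda\,\id_\calH$. When $\calH$ is infinite-dimensional, however, $A$ may have purely continuous spectrum and no eigenvectors, so this shortcut fails; one genuinely needs the projection-valued measure together with the fact that any bounded operator commuting with $A$ commutes with all of its spectral projections (a consequence of the Borel functional calculus, obtained by approximating $E(\Omega)$ by polynomials in $A$ in the strong operator topology). Verifying that last fact carefully, and that a $\{0,\id_\calH\}$-valued spectral measure is a point mass, is where the real work lies.
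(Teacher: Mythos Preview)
Your proposal is correct and follows the standard textbook route. The paper does not actually prove \cref{thm:schur} at all; it simply cites \citet[Theorem~3.5(a)]{Folland2015}. Your argument is essentially the proof Folland gives there: reduce to self-adjoint $T$ via $T = A + iB$, apply the spectral theorem, and use irreducibility to force every spectral projection to be $0$ or $\id_\calH$. Your contrapositive direction and your discussion of why the finite-dimensional eigenvalue shortcut fails are both accurate and more explicit than what the paper (or its reference) spells out.
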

See %
\citet[Theorem~3.5(a)]{Folland2015} for the proof. %
As we will see in the proof of the main theorem, an irreducible representation (or more generally, a \emph{simple object}) is a standard unit for expressive power. Namely, suppose $X$ is a simple object (such as a simple abelian group, and an irreducible representation), 
and $N$ is a non-trivial sub-object (such as a normal group, and a sub-representation), then we can conclude $N=X$, which means the universality of $N$ in $X$. Schur's lemma restates this in terms of morphism. That is, the commutative property $\pi \circ T = T \circ \pi$ implies $T$ is a homomorphism, and thus it has to be either zero or identity.

As a concrete example of an irreducible representation, we use the following regular representation of the affine group $\aff(m)$ on $L^2(\RR^m)$.
\begin{theorem} \label{thm:aff.irrep}
Let $G := \aff(m) := GL(m) \ltimes \RR^m$ be the affine group acting on $X=\RR^m$ by $(L,\tt) \cdot \xx = L\xx+\tt$, and let $\calH := L^2(\RR^m)$ be the Hilbert space of square-integrable functions.
Let $\pi : \aff(m) \to \calU(L^2(\RR^m))$ be the regular representation of the affine group $\aff(m)$ on $L^2(\RR^m)$, namely $\pi_g[f](\xx) := |\det L|^{-1/2} f( L^{-1}(\xx-\tt) )$ for any $g=(L,\tt) \in G$. Then $\pi$ is irreducible.
\end{theorem}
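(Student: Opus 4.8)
The plan is to apply Schur's lemma (\cref{thm:schur}): it suffices to show that every bounded operator $T$ on $\calH = L^2(\RR^m)$ satisfying $\pi_g T = T \pi_g$ for all $g \in \aff(m)$ is a scalar multiple of $\id_{\calH}$. Since $\aff(m)$ is generated by the translations $\{(\id,\tt) : \tt \in \RR^m\}$ and the linear maps $\{(L,0) : L \in GL(m)\}$, commutation with these two families suffices. I would diagonalize the translations by conjugating with the unitary Fourier transform $\Phi$ on $L^2(\RR^m)$ (so $\widehat{f} = \Phi f$): a direct change of variables shows that $\Phi\,\pi_{(\id,\tt)}\,\Phi^{-1}$ is multiplication by the character $\xxi \mapsto e^{-2\pi i \tt\cdot\xxi}$, while $\Phi\,\pi_{(L,0)}\,\Phi^{-1}$ sends $\widehat{f}(\xxi)$ to $|\det L|^{1/2}\widehat{f}(L^\top\xxi)$.

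First, commutation with all translations means that $\widehat{T} := \Phi T \Phi^{-1}$ commutes with multiplication by every character $e^{-2\pi i \tt\cdot\xxi}$, $\tt \in \RR^m$. These multiplications generate the maximal abelian von Neumann algebra of all multiplications by $L^\infty(\RR^m)$ functions; since such an algebra equals its own commutant, $\widehat{T}$ must itself be multiplication by some fixed symbol $\mu \in L^\infty(\RR^m)$. Equivalently, $T$ is a Fourier multiplier with symbol $\mu$. This is the classical description of the commutant of the translation group.

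Next I would impose commutation with the linear part. Equating $\widehat{T}$ composed with $\widehat{f}(\xxi)\mapsto|\det L|^{1/2}\widehat{f}(L^\top\xxi)$ in the two orders and cancelling the common nonzero factor yields $\mu(L^\top\xxi)=\mu(\xxi)$ for a.e.\ $\xxi$ and every $L\in GL(m)$. Since $GL(m)$ acts transitively on $\RR^m\setminus\{0\}$ and $\{0\}$ is Lebesgue-null, an essentially $GL(m)$-invariant symbol $\mu$ must be constant almost everywhere, say $\mu\equiv c$. Hence $\widehat{T}=c\,\id$, so $T=c\,\id_{\calH}$, and Schur's lemma yields irreducibility.

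The step I expect to require the most care is the first: proving that a bounded operator commuting with all translations is exactly a Fourier multiplier, i.e.\ identifying the commutant of the translation group with $L^\infty(\RR^m)$. This rests on the maximal-abelianness of the multiplication algebra generated by the characters and is the genuine analytic content of the argument; the change-of-variables computations and the transitivity of $GL(m)$ are routine by comparison. A secondary technical point is upgrading the per-$L$ almost-everywhere invariance $\mu(L^\top\xxi)=\mu(\xxi)$ to honest constancy of $\mu$, for which the cleanest route is a Fubini argument on $GL(m)\times\RR^m$, equivalently invoking ergodicity of the $GL(m)$-action on $\RR^m$ equipped with Lebesgue measure.
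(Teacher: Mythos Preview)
Your argument is correct. Both your proof and the paper's rely on the same two analytic ingredients---Fourier-diagonalizing the translation subgroup, and then invoking ergodicity (equivalently, transitivity on $\RR^m\setminus\{0\}$) of the $GL(m)$-action---but you route them through Schur's lemma and the commutant, whereas the paper works directly with invariant subspaces. Concretely, the paper first proves (following Kobayashi--Oshima) that every closed translation-invariant subspace of $L^2(\RR^m)$ has the form $\calF^{-1}[L^2(E)]$ for some measurable $E\subset\RR^m$; the $GL(m)$-invariance then forces $E$ to be $GL(m)$-invariant up to null sets, and ergodicity gives $E=\RR^m$ or $E=\emptyset$. Your version replaces ``translation-invariant subspace $\leftrightarrow$ subset $E$'' with the dual statement ``translation-intertwiner $\leftrightarrow$ multiplier $\mu\in L^\infty$'' via maximal abelianness of the multiplication algebra, and then ergodicity forces $\mu$ constant rather than $E$ trivial. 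The two arguments are mirror images: yours is perhaps more in keeping with the paper's own emphasis on Schur's lemma, while the paper's subspace version avoids citing the von Neumann-algebraic fact that $L^\infty$ is its own commutant. You are right that this commutant identification (or equivalently the paper's structure lemma for translation-invariant subspaces) is the step carrying the real analytic weight; your Fubini/ergodicity remark for the $GL(m)$ step is also exactly what the paper invokes.
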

See \cref{sec:irrep.affine} for the proof.

\section{Main Results} \label{sec:main} %
We introduce unitary representations $\pi$ and $\pihat$,
a \emph{joint-equivariant feature map} $\phi : X \times \Xi \to Y$,
a \emph{joint-equivariant machine} $\lm[\gamma;\phi]:X \to Y$, and present the ridgelet transform $\ridge[f;\psi] : \Xi \to \CC$ for joint-equivariant machines, yielding the universality $\lm[ \ridge[f;\psi]; \phi ]= c_{\phi,\psi} f$. We note that $\pi$ plays a key role in the main theorem, and the joint-equivariance is an essential property of depth-$n$ fully-connected network.

Let $G$ be a locally compact group equipped with a left invariant measure $\dd g$.
Let $X$ and $\Xi$ be $G$-spaces equipped with $G$-invariant measures $\dd x$ and $\dd \xi$, called the \emph{data domain} and the \emph{parameter domain}, respectively. %
Let $Y$ be a separable Hilbert space, called the \emph{output domain}.
Let $\calU(Y)$ be the space of unitary operators on $Y$, and let $\urep:G \to \calU(Y)$ be a unitary representation of $G$ on $Y$.
We call a $Y$-valued map $\phi$ on the data-parameter domain $X \times \Xi$, i.e. $\phi : X \times \Xi \to Y$, a \emph{feature map}.

Let $L^2(X;Y)$ denote the space of $Y$-valued square-integrable functions on $X$ equipped with the inner product $\iprod{\phi,\psi}_{L^2(X;Y)} := \int_X \iprod{\phi(x), \psi(x)}_{Y}\dd x$; and let $L^2(\Xi)$ denote the space of $\CC$-valued square-integrable functions on $\Xi$.

If there is no risk of confusion, we use the same symbol $\cdot$ for the $G$-actions on $X$, $Y$, and $\Xi$ (e.g., $g \cdot x$, $g \cdot y$, and $g \cdot \xi$).
On the other hand, to avoid the confusion between $G$-actions on output domain $Y$ and $Y$-valued function $f:X\to Y$, both ``$g \cdot f(x)$'' and ``$\urep_g[f(x)]$'' \emph{always} imply $G$-action on $Y$, and ``$\pi_g[f](x)$'' (introduced soon below) for $G$-actions on $f:X \to Y$.

Additionally, we introduce two unitary representations $\pi$ and $\pihat$ of $G$ on function spaces $L^2(X;Y)$ and $L^2(\Xi)$ as follows.
\begin{definition} \label{dfn:urep}
For each $g \in G$, $f \in L^2(X;Y)$ and $\gamma \in L^2(\Xi)$,%
\begin{align*}
    \pi_g[f](x)
    &:= \urep_g[ f(g^{-1} \cdot x) ]
    = g \cdot f(g^{-1} \cdot x), x \in X, \\ %
    \pihat_g[\gamma](\xi)
    &:= \gamma(g^{-1} \cdot \xi), \quad \xi \in \Xi.
\end{align*}
\end{definition}
In the main theorem, the irreducibility of $\pi$ will be a sufficient condition for the universality. On the other hand, the irreducibility of $\pihat$ is not necessary.
We have shown that $\pi$ and $\pihat$ are unitary representations in Lemmas~\ref{lem:pi.unitary} and \ref{lem:pihat.unitary}.

\subsection{Joint-Equivariant Feature Map}
We introduce the joint-group-equivariant feature map, extending the classical notion of group-equivariant feature maps. One of the major motivation to introduce this is that the depth-$n$ fully-connected network, the main subject of this study, is not equivariant but joint-equivariant.

\begin{definition}[Joint-$G$-Equivariant Feature Map]
We say a feature map $\phi : X \times \Xi \to Y$ is \emph{joint-$G$-equivariant} when
\begin{align*}
\phi(g\cdot x,g\cdot \xi) = g \cdot \phi(x,\xi), \quad (x,\xi) \in X \times \Xi,
\end{align*}
holds for all $g \in G$.
Especially, when $G$-action on $Y$ is trivial, i.e. $\phi(g \cdot x, g \cdot \xi) = \phi(x,\xi)$, we say \emph{joint-$G$-invariant}.
\end{definition}

A basic example is the fully-connected layer
\[\phi(\xx,(\aa,b)) = \sigma(\aa\cdot\xx-b),\]
which is \emph{not} $G$-equivariant \emph{but} joint-$G$-equivariant. See \cref{sec:depth2} for more details.

\begin{remark}[Relation to classical $G$-equivariance] \label{rem:classic}
The joint-$G$-equivariance is not a restriction but an extension of the classical notion of \emph{$G$-equivariance}, i.e. $\phi(g\cdot x,\xi) = g \cdot \phi(x,\xi)$. In fact, $G$-equivariance is a special case of joint-$G$-equivariance where $G$ acts trivially on parameter domain, i.e. $g \cdot \xi = \xi$ (see \cref{fig:joint}). Thus, all $G$-equivariant maps are automatically joint-$G$-equivariant.
\begin{figure}[t]
    \centering
    \includegraphics[width=\linewidth, trim=65 110 65 110, clip]{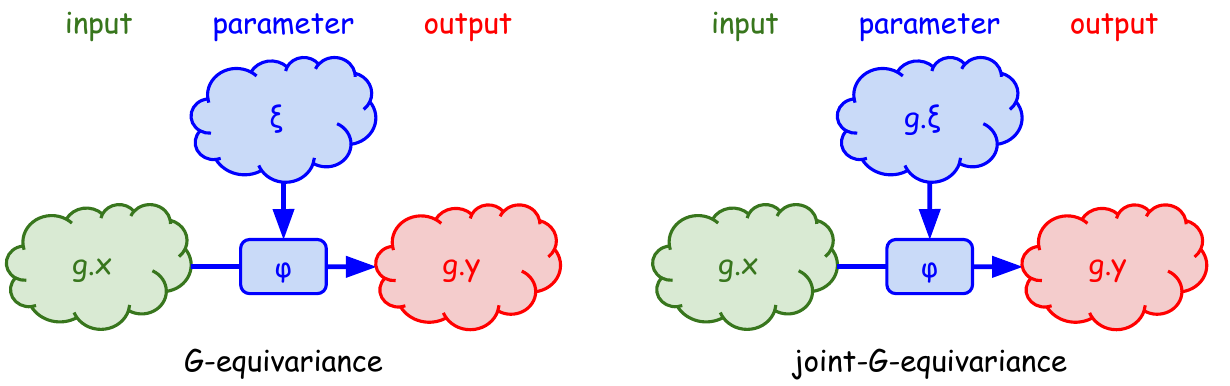}
    \vskip -0.1in
    \caption{The classical $G$-equivariant feature map $\phi:X \times \Xi \to Y$ is a subclass of joint-$G$-equivariant map where the $G$-action on parameter domain $\Xi$ is  \emph{trivial}, i.e. $g \cdot \xi = \xi$}
    \label{fig:joint}
    \vskip -0.1in
\end{figure}
\end{remark}

\subsubsection{Interpretation of Joint-Equivariant Maps} \label{sec:interpretation}
Obviously, 
$\phi$ is a $G$-map, namely a homomorphism between $G$-sets $X \times \Xi$ and $Y$. We denote the collection of all joint-$G$-equivariant maps as $\hom_G(X \times \Xi, Y)$.
Equivalently, 
$\phi$ is identified with a $G$-map $\phi_c:\Xi \to Y^X$ through \emph{currying} $\phi_c(\xi)(x) = \phi(x,\xi)$, satisfying $\phi_c(g \cdot \xi)(x) = \pi_g[\phi_c(\xi)](x)$.
Further, $\phi$ is identified with the third $G$-map $\phi_c':X \to Y^\Xi$ through $\phi_c'(\xi)(x) = \phi(x,\xi)$.
These identifications are summarized as tensor-hom adjunction: $\hom_G(X \times \Xi, Y) \cong \hom_G(\Xi, Y^X) \cong \hom_G(X, Y^\Xi)$.

In terms of geometric deep learning, for example, \citet{Cohen2019} formulate the feature map as a vector field (or \emph{section}). In their formulation, the joint-equivariant feature map $\phi_c : \Xi \to Y^X$ is understood as a global section of a trivial $G$-bundle $p:\Xi \times Y^X \to \Xi$ over base $\Xi$ with fiber $Y^X$, where structure group $G$ acts on fiber $Y^X$ by $\pi$.

We note, however, such geometric understanding is not unique. For example, in terms of learning equation $\lm \circ \ridge = \id_{Y^X}$, the learning machine $\lm : \Xi \to Y^X$ is a feature map, and the ridgelet transform $\ridge:Y^X \to \Xi$ is a section (right-inverse). In this perspective, we can conversely understand the feature map $\phi_c : \Xi \to Y^X$ itself as a vector bundle (or \emph{projection}) with base space $Y^X$ and total space $\Xi$.

\subsubsection{Construction of Joint-Equivariant Maps}
In the following, we list several construction methods of joint-equivariant maps in Lemmas \ref{lem:construct-equivalent}, \ref{lem:deep-equivalent} and \ref{lem:jem-equivariant} (in the next subsection), indicating the richness of the proposed concept. Whereas to construct a (non-joint) $G$-equivariant network, we must carefully and precisely design the network architecture \citep[see, e.g., a textbook of geometric deep learning][]{Bronstein2021gdl}, to construct a joint-$G$-equivariant network, we can easily and systematically obtain the one. 

First, we can synthesize a joint-equivariant map from \emph{any} map $\phi_0:X \to Y$.
\begin{lemma} \label{lem:construct-equivalent}
    Let $X$ and $Y$ be $G$-sets.
    Fix an arbitrary map $\phi_0:X \to Y$, and put $\phi(x,g) := \pi_g[\phi_0](x) = g \cdot \phi_0(g^{-1} \cdot x)$ for every $x \in X$ and $g \in G$. Then, $\phi : X \times G \to Y$ is joint-$G$-equivariant.
\end{lemma}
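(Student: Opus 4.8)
The plan is to verify the joint-$G$-equivariance condition $\phi(h \cdot x, h \cdot g) = h \cdot \phi(x,g)$ directly by unwinding the definition of $\phi$. The crucial observation is that the parameter domain here is the group $G$ itself, so the $G$-action on a parameter $g \in G$ is simply left multiplication $h \cdot g = hg$, and the action on $x \in X$ is the given $G$-action on the data domain. Everything then reduces to a bookkeeping computation using the group homomorphism property $\pi_{hg} = \pi_h \pi_g$ established in \cref{dfn:urep} (via \cref{lem:pi.unitary}).

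First I would substitute into the left-hand side and expand using the definition $\phi(x,g) = \pi_g[\phi_0](x) = g \cdot \phi_0(g^{-1} \cdot x)$. Replacing $x$ by $h \cdot x$ and $g$ by $hg$ gives
\begin{align*}
\phi(h \cdot x, hg)
&= \pi_{hg}[\phi_0](h \cdot x) \\
&= (hg) \cdot \phi_0\bigl( (hg)^{-1} \cdot (h \cdot x) \bigr).
\end{align*}
Next I would simplify the inner argument using $(hg)^{-1} = g^{-1} h^{-1}$ together with associativity of the $G$-action, so that $(hg)^{-1} \cdot (h \cdot x) = g^{-1} h^{-1} h \cdot x = g^{-1} \cdot x$. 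The $h^{-1}h$ cancels to the identity, which is exactly the point where the left-multiplication structure on the parameter domain does its work.

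Then I would factor the outer action $(hg) \cdot (\cdot) = h \cdot (g \cdot (\cdot))$, again by associativity, to obtain
\begin{align*}
\phi(h \cdot x, hg)
&= h \cdot \bigl( g \cdot \phi_0(g^{-1} \cdot x) \bigr) \\
&= h \cdot \phi(x,g),
\end{align*}
which is the desired joint-$G$-equivariance. There is no real obstacle here; the statement is essentially a restatement of the homomorphism property of $\pi$ phrased through the action on the parameter domain $G$. The only point requiring a moment's care is being explicit that the $G$-action on the parameter domain $\Xi = G$ is left translation, so that the defining identity $\phi(x,g) = \pi_g[\phi_0](x)$ is consistent with the general notation $\pi_g[\phi_c(\xi)] = \phi_c(g \cdot \xi)$ from \cref{sec:interpretation}; once that is fixed, the computation is forced.
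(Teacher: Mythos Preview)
Your proof is correct and is essentially identical to the paper's own argument: the paper also verifies the identity by direct substitution, computing $\phi(g \cdot x, g \cdot h) = (gh)\cdot\phi_0((gh)^{-1}\cdot(g\cdot x)) = g \cdot \phi(x,h)$, which is exactly your calculation with the roles of $g$ and $h$ swapped.
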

\begin{proof}
    For any $g,h \in G$, we have $\phi( g \cdot x, g \cdot h )
    = (gh) \cdot \phi_0( (gh)^{-1} \cdot (g \cdot x) ) 
    = g \cdot \phi(x,h)$.
\end{proof}
In particular, the case of $X=Y=\Xi=G$, namely $\phi:G \times G \to G$, is understood as a primitive type of joint-$G$-equivariant maps.

The next lemma suggests the compatibility with function compositions, or deep structures.
\begin{lemma}[Depth-$n$ Joint-Equivariant Feature Map $\phi_{1:n}$] \label{lem:deep-equivalent}%
Given a sequence of joint-$G$-equivariant feature maps $\phi_i : X_{i-1} \times \Xi_i \to X_i \ (i=1,\ldots,n)$, 
let $\Xi_{1:n} := \Xi_1 \times \cdots \times \Xi_n$ be the $n$-fold parameter space with the component-wise $G$-action $g \cdot \xi_{1:n} := (g \cdot \xi_1, \ldots, g \cdot \xi_n)$ for each $n$-fold parameters $\xi_{1:n} \in \Xi_{1:n}$, and let $\phi_{1:n} : X_0 \times \Xi_{1:n} \to X_n$ be the depth-$n$ feature map given by
\begin{align*}
\phi_{1:n}(x, \xi_{1:n}) := \phi_n(\bullet,\xi_n) \circ \cdots \circ \phi_1(x, \xi_1).    
\end{align*}
Then, $\phi_{1:n}$ is joint-$G$-equivariant.
\end{lemma}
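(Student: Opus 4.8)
The plan is to propagate the $G$-action through the composition one layer at a time, which amounts to a short induction on the layer index. First I would fix $g \in G$, an input $x \in X_0$, and parameters $\xi_{1:n} \in \Xi_{1:n}$, and introduce the intermediate activations of the untransformed computation, $y_0 := x$ and $y_i := \phi_i(y_{i-1}, \xi_i)$ for $i = 1, \ldots, n$, so that by definition $y_n = \phi_{1:n}(x, \xi_{1:n})$. In parallel I would introduce the activations of the transformed computation run on input $g \cdot x$ with the component-wise shifted parameters $g \cdot \xi_{1:n}$, namely $z_0 := g \cdot x$ and $z_i := \phi_i(z_{i-1}, g \cdot \xi_i)$, so that $z_n = \phi_{1:n}(g \cdot x, g \cdot \xi_{1:n})$. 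The goal is then exactly $z_n = g \cdot y_n$.

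The central claim is the layerwise invariant $z_i = g \cdot y_i$ for every $i = 0, 1, \ldots, n$, proved by induction on $i$. The base case $z_0 = g \cdot x = g \cdot y_0$ holds by definition. For the inductive step, assuming $z_{i-1} = g \cdot y_{i-1}$, I would compute
\begin{align*}
z_i = \phi_i(z_{i-1}, g \cdot \xi_i) = \phi_i(g \cdot y_{i-1}, g \cdot \xi_i) = g \cdot \phi_i(y_{i-1}, \xi_i) = g \cdot y_i,
\end{align*}
where the second equality is the inductive hypothesis and the third is precisely the joint-$G$-equivariance of the single layer $\phi_i$. Taking $i = n$ yields $z_n = g \cdot y_n$, i.e. $\phi_{1:n}(g \cdot x, g \cdot \xi_{1:n}) = g \cdot \phi_{1:n}(x, \xi_{1:n})$, which is the desired joint-$G$-equivariance since $g$, $x$, and $\xi_{1:n}$ were arbitrary.

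There is essentially no analytic obstacle here; the entire content is bookkeeping, and the statement could equivalently be obtained by an outer induction on the depth $n$ via the factorization $\phi_{1:n}(\bullet,\xi_{1:n}) = \phi_n(\bullet,\xi_n) \circ \phi_{1:n-1}(\bullet,\xi_{1:n-1})$. The one point that requires care — and the reason the $G$-action on $\Xi_{1:n}$ is defined component-wise — is that at the $i$-th layer the equivariance of $\phi_i$ must be fed the shifted parameter $g \cdot \xi_i$ on the nose, so that the factor $g$ can be pulled out of the input slot and the parameter slot simultaneously. If $G$ acted trivially or differently on some $\Xi_i$, the chain of equalities above would break at that layer. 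Thus the only thing to get right is to match the action on each $\Xi_i$ with the action on $X_{i-1}$ inherited from the previous layer's output, and the induction makes this matching explicit.
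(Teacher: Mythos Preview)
Your proof is correct and takes essentially the same approach as the paper: both propagate the $G$-action through the composition one layer at a time using the joint-equivariance of each $\phi_i$. The paper writes this as a single chain of equalities $\phi_{1:n}(g\cdot x, g\cdot\xi_{1:n}) = \cdots = g\cdot\phi_{1:n}(x,\xi_{1:n})$ with the $g$ ``bubbling up'' through the layers, while you formalize the same bookkeeping with intermediate activations $y_i, z_i$ and an explicit induction on $i$.
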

See \cref{sec:proof.deep-equivalent} for the proof. In other words, the composition of joint-equivariant maps defines a cascade product of morphisms: $\hom_G(\Xi_2, X_2^{X_1}) \times \hom_G(\Xi_1, X_1^{X_0}) \to \hom_G(\Xi_1 \times \Xi_2, X_2^{X_0})$.

\subsection{Joint-Equivariant Machine}
We introduce the joint-equivariant \emph{machine}, extending the integral representation.
\begin{definition}[Joint-Equivariant Machine]%
Fix an arbitrary joint-equivariant feature map $\phi:X \times \Xi \to Y$. For any scalar-valued measurable function $\gamma:\Xi\to\CC$, define a $Y$-valued map on $X$ by
\begin{align*}
    \lm[\gamma;\phi](x) :=
    \int_\Xi \gamma(\xi) \phi(x,\xi) \dd \xi, \quad x \in X,
\end{align*}
where the integral is understood as the Bochner integral. 
We also write $\lm_\phi := \lm[\bullet;\phi]$ for short.
If needed, we call the image $\lm[\gamma;\phi] : X \to Y$ a joint-equivariant \emph{machine},
and the integral transform $\lm[\bullet;\phi]$ of $\gamma$ a joint-equivariant \emph{transform}.
\end{definition}
The joint-equivariant machine inherits the concept of 
the original integral representation---integrate all the available parameters $\xi$ to indirectly select which parameters to use by weighting on them, which \emph{linearize} parametrization by lifting nonlinear parameters $\xi$ to linear parameter $\gamma$.

Moreover, the $G$-action $g \cdot \xi$ on parameter domain $\Xi$ is also linearized to linear representation $\pihat$ of $G$ on $L^2(\Xi)$ (defined in \cref{dfn:urep}). 
As an important consequence, a \emph{joint-$G$-equivariant machine $\lm_\phi$ is joint-$G$-equivariant}. For later use, we formulate this slogan as the following formula.
\begin{lemma} \label{lem:jem-equivariant}
Suppose $\phi : \Xi \to Y^X$ be joint-$G$-equivariant.
Then, the associated joint-$G$-equivariant machine $\lm_\phi : L^2(\Xi) \to L^2(X;Y)$ intertwines $\pihat$ and $\pi$: For every $g \in G$, $\lm_\phi \circ \pihat_g = \pi_g \circ \lm_\phi$.
\end{lemma}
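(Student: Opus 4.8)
The plan is to verify the intertwining relation $\lm_\phi \circ \pihat_g = \pi_g \circ \lm_\phi$ by a direct computation: I would unfold both sides at an arbitrary point $x \in X$ and reduce each to the same integral. The only nontrivial inputs are the $G$-invariance of the measure $\dd\xi$ on $\Xi$, the fact that the unitary $\urep_g$ (the $G$-action on $Y$) is a bounded linear operator and hence may be moved inside the Bochner integral, and the joint-$G$-equivariance of $\phi$.

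First I would expand the left-hand side. By the definitions of $\pihat$ and $\lm_\phi$,
\begin{align*}
\lm_\phi[\pihat_g[\gamma]](x) = \int_\Xi \gamma(g^{-1} \cdot \xi)\, \phi(x,\xi)\, \dd\xi .
\end{align*}
Substituting $\eta = g^{-1} \cdot \xi$ (so $\xi = g \cdot \eta$) and invoking the $G$-invariance of $\dd\xi$, this becomes $\int_\Xi \gamma(\eta)\, \phi(x, g \cdot \eta)\, \dd\eta$.

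Next I would expand the right-hand side. By the definition of $\pi$,
\begin{align*}
\pi_g[\lm_\phi[\gamma]](x) = \urep_g\Big[ \int_\Xi \gamma(\xi)\, \phi(g^{-1} \cdot x, \xi)\, \dd\xi \Big] .
\end{align*}
Since $\urep_g$ is a bounded linear operator on $Y$, it commutes with the Bochner integral, giving $\int_\Xi \gamma(\xi)\, \urep_g[\phi(g^{-1} \cdot x, \xi)]\, \dd\xi$. Applying joint-$G$-equivariance to the pair $(g^{-1} \cdot x, \xi)$ yields $\urep_g[\phi(g^{-1}\cdot x, \xi)] = g \cdot \phi(g^{-1}\cdot x, \xi) = \phi(g \cdot (g^{-1} \cdot x), g \cdot \xi) = \phi(x, g \cdot \xi)$. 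Hence the right-hand side equals $\int_\Xi \gamma(\xi)\, \phi(x, g \cdot \xi)\, \dd\xi$, which, after renaming the dummy variable, coincides with the reduced left-hand side.

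The main obstacle is not conceptual but a matter of justifying the interchange of $\urep_g$ with the Bochner integral and checking that the change of variables is valid in the $L^2$ setting; both are standard. The genuine content is the single application of joint-$G$-equivariance, which is precisely the algebraic identity that transfers the $G$-action carried by $\pi$ on the data side into the $G$-action carried by $\pihat$ on the parameter side.
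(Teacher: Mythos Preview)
Your proof is correct and essentially identical to the paper's own argument: both unfold $\lm_\phi[\pihat_g[\gamma]](x)$, use the $G$-invariance of $\dd\xi$ for the change of variables, and apply joint-$G$-equivariance to rewrite $\phi(x,g\cdot\xi)=\urep_g[\phi(g^{-1}\cdot x,\xi)]$, arriving at $\pi_g[\lm_\phi[\gamma]](x)$. The only cosmetic difference is that you expand both sides and meet in the middle, whereas the paper writes a single chain from left to right.
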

See \cref{sec:proof.jem-equivariant} for the proof. In other words, $\lm$ is a functor from $\hom_G(\Xi,Y^X)$ to $\hom_G( L^2(\Xi), L^2(X;Y) )$.

\subsection{Ridgelet Transform}
We introduce the ridgelet transform for joint-equivariant machines, extending the one for depth-2 fully-connected networks.
\begin{definition}[Ridgelet Transform]
For any joint-equivariant feature map $\psi:X \times \Xi \to Y$ and $Y$-valued Borel measurable function $f$ on $X$, put a scalar-valued map by
\begin{align*}
    \ridge[f;\psi](\xi) := 
    \int_X \iprod{f(x), \psi(x,\xi)}_{Y} \dd x, \quad \xi \in \Xi.
\end{align*}
We also write $\ridge_\psi := \ridge[\bullet;\psi]$ for short.
If there is no risk of confusion, we call both the image $\ridge[f;\psi] : X \to Y$ and the integral transform $\ridge[\bullet;\psi]$ of $f$ a ridgelet transform.
\end{definition}
Formally, it measures the similarity between target function $f$ and feature $\psi(\bullet,\xi)$ at $\xi$.
As long as the integrals are convergent, the ridgelet transform is the dual operator of the joint-equivariant transform (with common $\phi$):
\begin{align*}
    \iprod{ \gamma, \ridge[f;\phi] }_{L^2(\Xi)}
    &= \int_{X \times \Xi} \gamma(\xi) \iprod{\phi(x,\xi), f(x)}_{Y} \dd x \dd \xi\\
    &= \iprod{ \lm[\gamma;\phi], f }_{L^2(X;Y)}.
\end{align*}
As a dual statement for \cref{lem:jem-equivariant}, 
the ridgelet transform is also joint-$G$-invariant and particularly an intertwiner. 
\begin{lemma} \label{lem:ridge-equivariant}
Suppose $\psi \in \hom_G(\Xi,Y^X)$, then we have $\ridge_\psi \circ \pi_g = \pihat_g \circ \ridge_\psi$ for every $g \in G$.
\end{lemma}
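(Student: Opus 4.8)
The plan is to establish the intertwining relation $\ridge_\psi \circ \pi_g = \pihat_g \circ \ridge_\psi$ by a direct computation that unwinds the definitions and then transports the group element $g$ off of $f$ and onto $\psi$. This is precisely the adjoint counterpart of \cref{lem:jem-equivariant}, and it rests on three facts applied in sequence: the unitarity of $\urep_g$, the $G$-invariance of the measure $\dd x$, and the joint-$G$-equivariance of $\psi$.

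First I would fix $g \in G$ and $\xi \in \Xi$ and expand the left-hand side using the definitions of $\ridge_\psi$ and $\pi_g$:
\begin{align*}
\ridge_\psi[\pi_g[f]](\xi)
= \int_X \iprod{\urep_g[f(g^{-1}\cdot x)],\, \psi(x,\xi)}_Y \dd x.
\end{align*}
Because $\urep_g$ is unitary, its adjoint is $\urep_{g^{-1}}$, so I can move it across the inner product to rewrite the integrand as $\iprod{f(g^{-1}\cdot x),\, g^{-1}\cdot\psi(x,\xi)}_Y$. Next I would perform the change of variables $x \mapsto g\cdot x$; since $\dd x$ is $G$-invariant the measure is unchanged, yielding $\int_X \iprod{f(x),\, g^{-1}\cdot\psi(g\cdot x,\xi)}_Y \dd x$.

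The decisive step is to rewrite the feature term using joint-equivariance. Replacing $\xi$ by $g^{-1}\cdot\xi$ in the defining identity $\psi(g\cdot x, g\cdot\xi) = g\cdot\psi(x,\xi)$ and then acting by $g^{-1}$ gives $g^{-1}\cdot\psi(g\cdot x,\xi) = \psi(x, g^{-1}\cdot\xi)$. Substituting this collapses the integral to $\int_X \iprod{f(x),\, \psi(x, g^{-1}\cdot\xi)}_Y \dd x = \ridge_\psi[f](g^{-1}\cdot\xi)$, which by the definition of $\pihat$ in \cref{dfn:urep} is exactly $\pihat_g[\ridge_\psi[f]](\xi)$, completing the proof.

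I expect the computation itself to be routine; the only delicate points are getting the direction of the equivariance identity correct (it is easy to flip $g$ and $g^{-1}$) and justifying the change of variables, but the latter is immediate from the $G$-invariance of $\dd x$ together with the standing square-integrability assumptions that render the Bochner integral well defined. No appeal to Schur's lemma or irreducibility is needed here—those enter only in the main universality theorem—whereas this lemma is purely the equivariance bookkeeping dual to that of $\lm_\phi$.
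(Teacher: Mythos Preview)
Your proof is correct and matches the paper's own argument essentially line for line: expand the definition, pass $\urep_g$ across the inner product by unitarity, change variables via the $G$-invariance of $\dd x$, and then invoke joint-equivariance of $\psi$ to identify $g^{-1}\cdot\psi(g\cdot x,\xi)=\psi(x,g^{-1}\cdot\xi)$. The only cosmetic difference is that the paper writes $\urep_g^*$ where you write $g^{-1}\cdot$, which is the same thing.
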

In other words, $\ridge_\psi \in \hom_G(L^2(X;Y),L^2(\Xi))$.
See \cref{sec:proof.ridge-equivariant} for the proof.

\subsection{Main Theorem}
At last, we state the main theorem, that is, the reconstruction formula for joint-equivariant machines.

\begin{theorem}[Reconstruction Formula] \label{thm:main}
Assume (1) feature maps $\phi,\psi : X \times \Xi \to Y$ are joint-$G$-equivariant, (2) composite operator $\lm_\phi \circ \ridge_\psi : L^2(X;Y) \to L^2(X;Y)$ is bounded (i.e., Lipschitz continuous), and (3) the unitary representation $\pi : G \to \calU(L^2(X;Y))$ defined in \cref{dfn:urep} is irreducible.
Then, there exists a bilinear form $\iiprod{\phi,\psi} \in \CC$ (independent of $f$) such that for any $Y$-valued square-integrable function $f \in L^2(X;Y)$,
\begin{align*}
    \lm_\phi [\ridge_\psi [f]]
    = \int_{\Xi} \int_X \iprod{f(x), \psi(x,\xi)}_Y \dd x \phi(\bullet,\xi) \dd \xi 
    = \iiprod{\phi,\psi} f. %
\end{align*}
\end{theorem}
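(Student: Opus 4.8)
The plan is to exploit the two equivariance properties already established---\cref{lem:jem-equivariant} for the machine $\lm_\phi$ and \cref{lem:ridge-equivariant} for the ridgelet transform $\ridge_\psi$---and then invoke Schur's lemma. The whole point is that the analytic content of the reconstruction formula collapses onto the single algebraic fact that the composite operator commutes with the representation $\pi$, after which irreducibility forces it to be a scalar.

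First I would set $T := \lm_\phi \circ \ridge_\psi : L^2(X;Y) \to L^2(X;Y)$ and verify that $T$ intertwines $\pi$ with itself. By \cref{lem:ridge-equivariant} we have $\ridge_\psi \circ \pi_g = \pihat_g \circ \ridge_\psi$, and by \cref{lem:jem-equivariant} we have $\lm_\phi \circ \pihat_g = \pi_g \circ \lm_\phi$. Chaining these two identities gives, for every $g \in G$,
\begin{align*}
    T \circ \pi_g
    = \lm_\phi \circ \ridge_\psi \circ \pi_g
    = \lm_\phi \circ \pihat_g \circ \ridge_\psi
    = \pi_g \circ \lm_\phi \circ \ridge_\psi
    = \pi_g \circ T,
\end{align*}
so that $T$ commutes with $\pi_g$ for all $g$; the intermediate representation $\pihat$ on $L^2(\Xi)$ serves only as a bridge and cancels out.

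Next I would apply Schur's lemma (\cref{thm:schur}) on $\calH = L^2(X;Y)$. Assumption (2) guarantees that $T$ is a bounded operator, and assumption (3) guarantees that $\pi$ is irreducible, so both hypotheses of Schur's lemma are met. Hence $T = c\,\id_{L^2(X;Y)}$ for some scalar $c \in \CC$. Setting $\iiprod{\phi,\psi} := c$ then yields $\lm_\phi[\ridge_\psi[f]] = \iiprod{\phi,\psi}\,f$ for every $f \in L^2(X;Y)$, which is exactly the reconstruction formula once the definitions of $\lm_\phi$ and $\ridge_\psi$ are written out under the integral. The asserted bilinearity of $\iiprod{\phi,\psi}$ in $(\phi,\psi)$ is then immediate from the linearity of $\gamma \mapsto \lm[\gamma;\phi]$ in $\phi$ together with the conjugate-linearity of $\psi \mapsto \ridge[f;\psi]$.

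The main obstacle is not the algebra but making sure the hypotheses of Schur's lemma genuinely apply: Schur's lemma is a statement about \emph{bounded} intertwiners, so the boundedness assumption (2) is doing essential work---without it the Bochner integral defining $\lm_\phi \circ \ridge_\psi$ need not yield a well-defined bounded operator \emph{on} $L^2(X;Y)$, and the conclusion $T = c\,\id$ could break down. I would therefore take care to confirm that $T$ maps $L^2(X;Y)$ into itself (rather than into some larger space of $Y$-valued functions) and that the displayed intertwining identity holds as an equality of bounded operators, not merely as a formal manipulation of the integral kernels.
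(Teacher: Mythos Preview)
Your proposal is correct and follows essentially the same approach as the paper: define $T := \lm_\phi \circ \ridge_\psi$, chain \cref{lem:ridge-equivariant} and \cref{lem:jem-equivariant} to show $T$ commutes with $\pi$, invoke Schur's lemma using assumptions (2) and (3), and read off bilinearity of the resulting scalar from the bilinearity of $\lm_\phi \circ \ridge_\psi$ in $(\phi,\psi)$. Your added remarks on why boundedness is essential are accurate and go slightly beyond what the paper spells out, but the argument itself is the same.
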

In practice, once the irreducibility of the representation $\pi$ on $L^2(X;Y)$ is verified, the ridgelet transform $\ridge_\psi$ becomes a right inverse operator of joint-equivariant transform $\lm_\phi$ as long as $\iiprod{\phi,\psi} \neq 0,\infty$. Despite the wide coverage of examples, the proof is brief and simple as follows.
\begin{proof}
Put $T := \lm_\phi \circ \ridge_\psi: L^2(X;Y) \to L^2(X;Y)$. By \cref{lem:jem-equivariant,lem:ridge-equivariant}, $T$ commutes with $\pi$ as follows
\begin{align*}
    \lm_\phi \circ \ridge_\psi \circ \pi_g
    = \lm_\phi \circ \pihat_g \circ \ridge_\psi
    = \pi_g \circ \lm_\phi \circ \ridge_\psi
\end{align*}
for all $g \in G$.
Hence by Schur's lemma (\cref{thm:schur}), there exist a constant $C_{\phi,\psi} \in \CC$ such that 
    $\lm_\phi \circ \ridge_\psi = C_{\phi,\psi} \id_{L^2(X)}$.
Since $\lm_\phi \circ \ridge_\psi$ is bilinear in $\phi$ and $\psi$, $C_{\phi,\psi}$ is bilinear in $\phi$ and $\psi$.
\end{proof}

\begin{remark}
\begin{enumerate}
\item When $\pi$ is not irreducible (thus reducible) and admits an irreducible decomposition such as $L^2(X;Y) = \bigoplus_{i=1}^\infty \calH_i$, then the reconstruction formula $\lm \circ \ridge[f] = f$ holds for every $f \in \calH_k$ for some $k$. This is another consequence from Schur's lemma.
\item The irreducibility is required only for $\pi$, and not for $\pihat$. This asymmetry originates from the fact that our main theorem focuses on the universality of the learning machine, namely $\lm_\phi[\gamma]:X \to Y$, not on its dual $\ridge_\psi[f]:\Xi \to \RR$. When $\pihat$ is irreducible, we can further conclude $\ridge_\psi \circ \lm_\phi[\gamma] = \gamma$ for any $\gamma \in L^2(\Xi)$ (the order of composition is reverted from $\lm_\phi \circ \ridge_\psi$). In practical examples such as fully-connected networks and wavelet analysis, however, $\ridge_\psi \circ \lm_\phi$ is only a projection due to the redundancy of parameter distribution $\gamma(\aa,b)$.
\item Relations to $cc$-universality, and sufficient conditions on $L^2$-boundedness of $\lm \circ \ridge$, are discussed in \cref{sec:bddness,sec:bochner-cc} respectively.
\item %
As also mentioned in \cref{sec:interpretation}, $\lm_\phi : L^2(\Xi) \to L^2(X;Y)$ is a $G$-equivariant vector bundle, and $\ridge_\psi : L^2(X;Y) \to L^2(\Xi)$ is a $G$-equivariant section.
\item
The assumptions on feature maps $\phi,\psi$ that they are joint-equivariant and not orthogonal need to be verified in a case-by-case manner. Fortunately, we can use the closed-form expression of the ridgelet transform to our advantage.
    For example, for fully-connected networks (\cref{sec:ex-depth-n}) and quadratic-form networks (\cref{sec:quad}), the joint-equivariance holds for any activation function.
    For the case of depth-2 fully-connected networks, it is known that the constant is zero if and only if the activation function is a polynomial function \citep[see e.g.,][]{Sonoda2015acha}.
\end{enumerate}
\end{remark}

\section{Example: Depth-$n$ Joint-Equivariant Machine} \label{sec:deep-ridgelet}
As pointed out in \cref{lem:deep-equivalent}, the depth-$n$ feature map $\phi_{1:n}(x,\xi_{1:n}) = \phi_n(\bullet, \xi_n) \circ \cdots \circ \phi_1(x,\xi_1)$ is joint-$G$-equivariant when each component map $\phi_i$ is joint-equivariant. Hence, we can construct an $L^2(X;Y)$-universal deep joint-equivariant machine $\dlm[\gamma;\phi_{1:n}]$ (see also \cref{fig:universal}).
\begin{figure}[t]
    \centering
    \includegraphics[width=\linewidth, trim=40 110 40 110, clip]{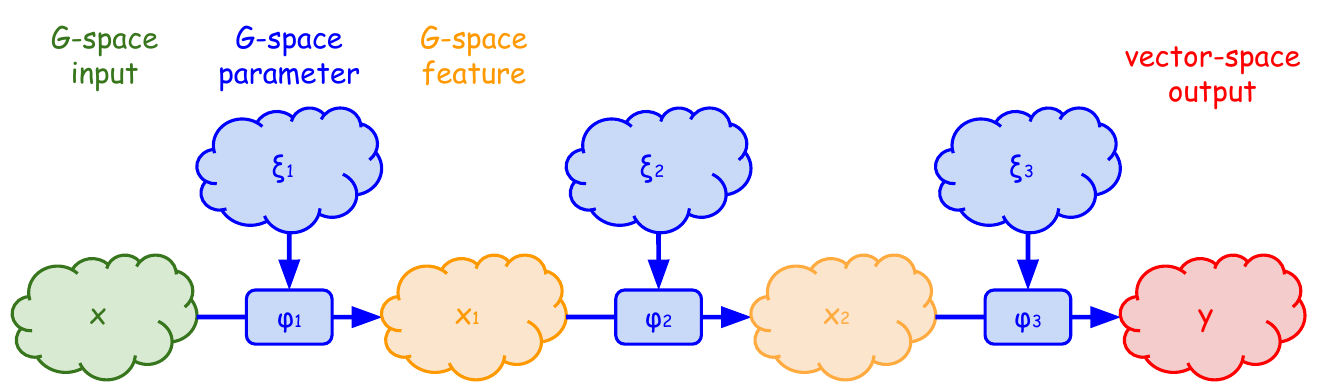}
    \vskip -0.1in
    \caption{Deep $Y$-valued joint-$G$-equivariant machine on $G$-space $X$ is $L^2(X;Y)$-universal when unitary representation $\pi$ of $G$ on $L^2(X;Y)$ is irreducible, and the distribution of parameters for the machine to represent a given map $f:X \to Y$ is exactly given by the ridgelet transform $\ridge[f]$}
    \label{fig:universal}
    \vskip -0.1in
\end{figure}
\begin{corollary}[Deep Ridgelet Transform] \label{cor:deep-ridgelet}
    For any maps $\gamma \in L^2(\Xi_{1:n})$ and $f \in L^2(X;Y)$, put 
    \begin{align*}
    \dlm[\gamma;\phi_{1:n}](x)
    &:= \int_{\Xi_{1:n}} \gamma(\xi_{1:n}) \phi_{1:n}(x,\xi_{1:n}) \dd \xi_{1:n}, \ x \in X,\\
    \ridge[f;\psi_{1:n}](\xi_{1:n})
    &:= \int_{X} \iprod{f(x),\psi_{1:n}(x,\xi_{1:n})}_Y \dd x, \ \xi_{1:n} \in \Xi_{1:n}.
\end{align*}
Under the assumptions that $\dlm_{\phi_{1:n}} \circ \ridge_{\psi_{1:n}}$ is bounded, and that $\pi$ is irreducible, there exists a bilinear form $\iiprod{\phi_{1:n},\psi_{1:n}}$ satisfying $\dlm_{\phi_{1:n}} \circ \ridge_{\psi_{1:n}} = \iiprod{\phi_{1:n},\psi_{1:n}} \id_{L^2(X;Y)}$.
\end{corollary}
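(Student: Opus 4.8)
The plan is to recognize \cref{cor:deep-ridgelet} as a direct specialization of the main theorem (\cref{thm:main}) to the composite feature map $\phi_{1:n}$, so that essentially no new analytic work is required beyond verifying the hypotheses. First I would identify $\Xi_{1:n} = \Xi_1 \times \cdots \times \Xi_n$ as the parameter domain $\Xi$ appearing in \cref{thm:main}, equipped with the product measure $\dd \xi_{1:n} = \dd \xi_1 \cdots \dd \xi_n$ and the diagonal (component-wise) $G$-action $g \cdot \xi_{1:n} = (g \cdot \xi_1, \ldots, g \cdot \xi_n)$. With this identification, $\dlm[\gamma;\phi_{1:n}]$ and $\ridge[f;\psi_{1:n}]$ are literally the joint-equivariant machine $\lm[\gamma;\phi]$ and the ridgelet transform $\ridge[f;\psi]$ instantiated at $\phi = \phi_{1:n}$ and $\psi = \psi_{1:n}$, with parameter domain $\Xi = \Xi_{1:n}$.

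Second, I would check that the three hypotheses of \cref{thm:main} are met. Hypothesis (1), joint-$G$-equivariance of $\phi_{1:n}$ and $\psi_{1:n}$, is supplied by \cref{lem:deep-equivalent}: since each layer $\phi_i$ (and, constructing $\psi_{1:n}$ analogously, each $\psi_i$) is joint-$G$-equivariant, the lemma guarantees that the composite is joint-$G$-equivariant with respect to the diagonal action on $\Xi_{1:n}$; all that \cref{thm:main} actually needs is the joint-equivariance of the composites themselves. Hypotheses (2) and (3), the $L^2$-boundedness of $\dlm_{\phi_{1:n}} \circ \ridge_{\psi_{1:n}}$ and the irreducibility of $\pi$ on $L^2(X;Y)$, are precisely the two standing assumptions of the corollary, so they carry over verbatim.

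The one point meriting care --- and the closest thing to an obstacle --- is confirming that $\Xi_{1:n}$ is a legitimate instance of the abstract parameter domain, i.e. that the product measure is $G$-invariant under the \emph{diagonal} action. This follows because $G$-invariance of each factor $\dd \xi_i$ lets me push the change of variables $\xi_i \mapsto g \cdot \xi_i$ through one coordinate at a time, leaving $\dd \xi_{1:n}$ unchanged; consequently $\pihat$ on $L^2(\Xi_{1:n})$ is unitary and the intertwining/duality machinery behind \cref{lem:jem-equivariant,lem:ridge-equivariant} applies without modification.

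Finally, with all hypotheses in place, I would invoke \cref{thm:main} with $(\phi,\psi,\Xi) = (\phi_{1:n}, \psi_{1:n}, \Xi_{1:n})$ to obtain a constant $\iiprod{\phi_{1:n},\psi_{1:n}} \in \CC$, independent of $f$, satisfying $\dlm_{\phi_{1:n}} \circ \ridge_{\psi_{1:n}} = \iiprod{\phi_{1:n},\psi_{1:n}} \id_{L^2(X;Y)}$. The bilinearity in the pair $(\phi_{1:n},\psi_{1:n})$ and the independence from $f$ are inherited directly from the Schur's-lemma argument (\cref{thm:schur}) used in the proof of \cref{thm:main}, which completes the corollary.
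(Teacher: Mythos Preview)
Your proposal is correct and essentially matches the paper's approach: the corollary is presented there as an immediate specialization of \cref{thm:main} with $(\phi,\psi,\Xi)=(\phi_{1:n},\psi_{1:n},\Xi_{1:n})$, relying on \cref{lem:deep-equivalent} for joint-$G$-equivariance and on the stated assumptions for boundedness and irreducibility. Your extra care in verifying $G$-invariance of the product measure $\dd\xi_{1:n}$ under the diagonal action is a technical detail the paper leaves implicit.
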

Again, it extends the original integral representation, and inherits the \emph{linearization} trick of nonlinear parameters $\xi_{1:n}$ by integrating all the possible parameters (beyond the difference of layers) and indirectly select which parameters to use by weighting on them.

\section{Example: Depth-$n$ Fully-Connected Network}\label{sec:ex-depth-n}
We explain the case of depth-$n$ (precisely, depth-$n+1$) fully-connected network. 

Set $X=Y=\RR^m$ (input and output domains),
and for each $i \in \{1, \ldots, n\}$, set $X_i := \RR^{d_i}$ (with $X_1 = X$ and $X_{n+1} = Y$),
$\Xi_i := \RR^{p_i \times d_i}\times\RR^{p_i}\times\SS_{d_{i+1}}^{q_i}$ (parameter domain), where $\SS_d$ denotes the $d-1$-dim. unit sphere, $\sigma_i:\RR^{p_i} \to \RR^{q_i}$ (activation functions), and define the feature map (vector-valued fully-connected neurons) as
\begin{align*}
    \phi_i(\xx_i,\xxi_i) := C_i \sigma_i( A_i \xx_i - \bb_i ), %
\end{align*}
for every $\xx_i \in \RR^{d_i}, \xxi_i=(A_i,\bb_i,C_i) \in \Xi_i$.
Specifically, $d_1 = d_{n+1}=m$. If there is no risk of confusion, we omit writing $i$ for simplicity.

Let $O(m)$ denote the orthogonal group in dimension $m$.
Let $G := O(m) \times \aff(m)$ be the product group of $O(m)$ and $\aff(m) = GL(m) \ltimes \RR^m$.
We suppose $G$ acts on the input and output domains as below: For any $g = (Q,L,\tt) \in G=O(m) \times \left( GL(m) \ltimes \RR^m \right)$,
\begin{align*}
    g \cdot \xx := L\xx+\tt, \ \xx \in X, \quad  %
    g \cdot \yy := \urep_g[\yy] := Q \yy, \ \yy \in Y.
\end{align*}
Namely, the group actions of both $O(m)$ on $X$ and $\aff(m)$ on $Y$ are trivial.

Let $\pi$ be the unitary representation of $G$ on the vector-valued square-integrable functions $\ff \in L^2(X;Y)$, defined by
\begin{align*}
\pi_g[\ff](\xx) := |\det L|^{-1/2} Q \ff(L^{-1}(\xx - \tt)), \quad 
\xx \in X%
\end{align*}
for each $g = (Q,L,\tt) \in O(m) \times ( GL(m) \ltimes \RR^m )$.
\begin{lemma} \label{lem:oaff.irrep}
The above $\pi : G \to \calU(L^2(\RR^m;\RR^m))$ is irreducible.
\end{lemma}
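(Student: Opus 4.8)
The plan is to recognize that $\pi$ is an external tensor product of two irreducible representations and then pin down its commutant, so that Schur's lemma (\cref{thm:schur}) forces only scalars to commute with $\pi(G)$. Since \cref{thm:schur} is stated over $\CC$, I regard the output space as $\CC^m$ (complexifying if one insists on the real space), and I identify $L^2(\RR^m;\CC^m) \cong L^2(\RR^m)\otimes\CC^m$ by writing $\ff = \sum_k g_k \otimes e_k$ with $g_k \in L^2(\RR^m)$ and $\{e_k\}$ the standard basis of $\CC^m$.

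First I would exhibit the tensor structure. A direct check on elementary tensors $g\otimes v$ gives
\begin{align*}
\pi_{(Q,L,\tt)}(g\otimes v) = \bigl(\rho_{(L,\tt)}\,g\bigr)\otimes(Qv),
\qquad \text{i.e.}\qquad
\pi_{(Q,L,\tt)} = \rho_{(L,\tt)}\otimes\tau_Q,
\end{align*}
where $\rho_{(L,\tt)}[g](\xx):=|\det L|^{-1/2}g(L^{-1}(\xx-\tt))$ is the regular representation of $\aff(m)$ on $L^2(\RR^m)$ and $\tau_Q:=Q$ is the defining representation of $O(m)$ on $\CC^m$. Thus $\pi=\rho\boxtimes\tau$ is the external tensor product of the $\aff(m)$-action (on the argument) and the $O(m)$-action (on the output).

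Next I would invoke the two ingredient irreducibilities. The factor $\rho$ is irreducible by \cref{thm:aff.irrep}. For $\tau$, I would argue by a short commutant computation that any $M\in\mathrm{End}(\CC^m)$ commuting with all $Q\in O(m)$ is scalar: commuting with every diagonal sign-change matrix $\mathrm{diag}(\pm1,\ldots,\pm1)\in O(m)$ forces $M$ to be diagonal, and commuting with the permutation matrices in $O(m)$ forces the diagonal entries to coincide, so $M=c\,\id_{\CC^m}$; hence $\tau$ is irreducible (trivially so for $m=1$).

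Finally I would compute the commutant of $\pi(G)$ by a two-step Schur argument, crucially exploiting that the output factor $\CC^m$ is finite-dimensional, so that any bounded $T$ on $L^2(\RR^m)\otimes\CC^m$ is an honest $m\times m$ block matrix $T=(T_{ij})$ with each $T_{ij}$ a bounded operator on $L^2(\RR^m)$. Commutation of $T$ with $\rho_{(L,\tt)}\otimes\id_{\CC^m}$ for every $(L,\tt)$ forces each block to commute with all of $\rho(\aff(m))$, whence $T_{ij}=c_{ij}\,\id_{L^2(\RR^m)}$ by \cref{thm:aff.irrep} and \cref{thm:schur}; thus $T=\id_{L^2(\RR^m)}\otimes M$ with $M=(c_{ij})$. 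Commutation of $T$ with $\id_{L^2(\RR^m)}\otimes\tau_Q$ for every $Q$ then forces $M$ to commute with $O(m)$, so $M=c\,\id_{\CC^m}$ by the previous step, giving $T=c\,\id$; \cref{thm:schur} then yields irreducibility. The one point demanding care -- the expected obstacle -- is the passage to the commutant in infinite dimensions: the identity ``commutant of a tensor product $=$ tensor product of commutants'' is delicate in general, but here the finite-dimensionality of $\CC^m$ collapses it to the elementary block-matrix manipulation above, so no von Neumann algebra commutation theorem is needed.
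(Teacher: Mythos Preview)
Your proof is correct and follows the same structural idea as the paper: identify $\pi$ as the external (outer) tensor product $\rho\boxtimes\tau$ of the regular representation $\rho$ of $\aff(m)$ on $L^2(\RR^m)$ and the defining representation $\tau$ of $O(m)$ on $\CC^m$, observe that each factor is irreducible (the first by \cref{thm:aff.irrep}, the second by an elementary commutant check), and conclude that the tensor product is irreducible. The only difference is in this last step. The paper simply invokes a black-box result---\citet[Theorem~7.12]{Folland2015}, stating that an outer tensor product of representations of locally compact groups is irreducible iff both factors are---whereas you reprove this in the present special case by the block-matrix commutant argument, exploiting that the $\CC^m$ factor is finite-dimensional so that no von~Neumann commutation theorem is needed. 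Your route is more self-contained and elementary; the paper's is shorter but relies on the external reference. Both are valid and yield the same conclusion.
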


See \cref{sec:proof.oaff.irrep} for the proof.
Additionally, we put the dual action of $G$ on parameter domain $\Xi_i$ as below:
\begin{align*}
    g \cdot (A_i,\bb_i,C_i) :=
   \begin{cases}
    (A_i L^{-1}, \bb_i + A_i L^{-1} \tt, C_i), & i=1\\
    (A_i, \bb_i, C_i), & i \neq 1, n\\
    (A_i, \bb_i, QC_i), & i = n
   \end{cases}
\end{align*}
for all $g = (Q,L,\tt) \in O(m) \times (GL(m) \ltimes \RR^m), \ (A_i,\bb_i,C_i) \in \Xi_i$.

Then, the composition of feature maps $\phi_{1:n}(\xx,\xxi_{1:n}) := \phi_n(\bullet, \xxi_n) \circ \cdots \circ \phi_1(\xx,\xxi_1)$ is joint-$G$-equivariant. In fact,
	\begin{align*}
		\phi_1( g \cdot \xx, g \cdot \xxi_1 )
		&= C_1\sigma\left( A_1L^{-1} (L\xx+\tt) - (\bb_1 + A_1L^{-1}\tt)\right)\\
        &= C_1\sigma(A_1\xx-\bb_1)
		= \phi_1( \xx, \xxi_1 ),\\
		\phi_i( \xx, g \cdot \xxi_i )
        &= C_i\sigma(A_i\xx-\bb_i)
		= \phi_i( \xx, \xxi_i ), \quad i \neq 1,n\\
		\phi_n( \xx, g \cdot \xxi_n )
        &= Q C_n\sigma(A_n\xx-\bb_n)
		= g \cdot \phi_n( \xx, \xxi_n ),
	\end{align*}
Therefore $\phi_{1:n}( g \cdot \xx, g \cdot \xxi_{1:n} )=g \cdot \phi_{1:n}( \xx, \xxi_{1:n} )$.

So by putting depth-$n$ neural network and the corresponding ridgelet transform as below
	\begin{align*}
		\dnn[\gamma;\phi_{1:n}](\xx) &= \int_{\Xi_{1:n}} \gamma(\xxi_{1:n}) \phi_{1:n}(\xx,\xxi_{1:n}) \dd \xxi_{1:n}, \\
		\ridge[\ff;\psi_{1:n}](\xxi_{1:n}) &= \int_{\RR^m} \ff(\xx) \cdot \overline{\psi_{1:n}(\xx,\xxi_{1:n})}\dd\xx, %
	\end{align*}
\cref{thm:main} yields the reconstruction formula $\dnn_{\phi_{1:n}} \circ \ridge_{\psi_{1:n}} = \iiprod{\phi_{1:n},\psi_{1:n}} \id_{L^2(\RR^m;\RR^m)}$.

\section{Example: Depth-$n$ Group Convolutional Network} \label{sec:gconv}
As mentioned in \cref{rem:classic}, all the classical equivariant feature maps, namely $\phi: X \times \Xi \to Y$ with trivial $G$-action on parameters: $\phi( g \cdot x, \xi) = g \cdot \phi(x,\xi)$, are automatically joint-equivariant.
Therefore, once the irreducibility of representation $\pi$ is verified, our main theorem can state the ridgelet transform for classical $G$-equivariant networks.

In fact, in the case of group convolutional networks (GCNs) with \emph{vector} inputs, we can reuse the irreducible representation for affine groups $\aff(m)$.
In the following, we explain the ridgelet transform for \emph{depth-$n$} GCNs, extending a general \emph{depth-2} GCNs formulated by \citet{Sonoda2022gconv}, which
covers a wide range of typical group equivariant networks such as an ordinary $G$-convolution, DeepSets and $\mathrm{E}(n)$-equivariant maps in a unified manner.

In the previous study, the ridgelet transform was derived only for depth-2 GCNs, which is due to the proof technique based on the \emph{Fourier expression method} \citep[see][for more details]{Sonoda2024fourier}, another proof technique for ridgelet transforms that does not require the irreducibility assumption but is limited to depth-2 learning machines.

In the following, we extend the GCNs from depth-$2$ to \emph{depth-$n$} and derive the ridgelet transform by reviewing it from the group theoretic perspective.
The main idea is to turn the depth-$n$ fully-connected network (FCN) $\phi_{1:n}$ in \cref{sec:ex-depth-n} to a depth-$n$ $G$-convolutional network, denoted $\phi^\tau_{1:n}$, by following the construction of the previous study.

\subsection{Notations}
Besides the primary group $G$ for convolution, we introduce an auxiliary group $A := O(m) \times \aff(m) = O(m) \times (GL(m) \ltimes \RR^m)$, where $A$ and $G$ need not be homomorphic.
Eventually, the irreducibility assumption of $\pi$ is required not for $G$ but for $A$.
Hence, different from \cref{sec:ex-depth-n}, 
the group acting on $L^2(X;Y)$ by $\pi$ is not $G$ but $A$.
In accordance with the previous study, we write $T_g [\bullet]$ for $G$-action, $\alpha \cdot \bullet$ for $A$-action if needed, and $\tau_g[f](x) := T_g[f(T_{g^{-1}}[x])]$ for $G$-action on function $f:X\to Y$.
By $L^2_G(X;Y)$, we denote the space of $G$-equivariant $Y$-valued functions $f$ on $X$ that is square-integrable at the identity element $1_G$ of $G$, namely $L^2_G(X;Y) = \{ f \in \hom_G(X,Y^G) \mid \| f(\bullet)(1_G) \|_{L^2(X;Y)} < \infty\} \cong \{ \tau_\bullet[f_1] \mid f_1 \in L^2(X;Y)\}$.

From the next subsections, we will turn a joint-$A$-equivariant map $\phi_{1:n}$ to $G$-equivariant map $\phi^\tau_{1:n}$.
\subsection{$G$-Convolutional Feature Map}
For each $i$, let $\phi_i : X_i \times \Xi_i \to X_{i+1}$ be the 
fully-connected map $\phi_i(\xx_i,\xxi_i) := C_i \sigma_i( A_i \xx_i - \bb_i )$ (as in \cref{sec:ex-depth-n}). We define the $G$-convolutional map $\phi^\tau_i : X_i \times \Xi_i \to X_{i+1}^G$ as follows: For every $\xx_i \in X_i$ and $\xxi_i = (A_i, \bb_i, C_i) \in \Xi_i$,
\begin{align*}
    \phi^\tau_i( \xx_i, \xxi_i )(g)
    &:= \tau_g[\phi_i](\xx_i,\xxi_i) \\
    &= T_g[(C_i \sigma_i( A_i T_{g^{-1}}[\xx_i] - \bb_i )], \quad g \in G. %
\end{align*}
By appropriately specifying the $G$-action $T$, the expression $A_i T_{g^{-1}}[\xx_i]$ can reproduce a variety of general $G$-convolution products, say $a *_T x$, such as an ordinary $G$-convolution, the ones employed in DeepSets and $\mathrm{E}(n)$-equivariant maps \citep[see Section~5 of][]{Sonoda2022gconv}.

Similarly to \cref{lem:construct-equivalent}, each $G$-convolutional map $\phi^\tau_i$ is $G$-equivariant in the classical sense
because for any $g,h \in G$,
\begin{align*}
&\phi^\tau_i( T_g[\xx_i], \xxi_i )(h)
= T_h[\phi_i( T_{h^{-1}}[T_g[\xx_i]], \xxi_i )] \notag \\
&= T_g[T_{g^{-1}h}[\phi_i( T_{(g^{-1}h)^{-1}}[\xx_i], \xxi_i )]] 
= \tau_g[\phi^\tau_i( \xx_i, \xxi_i )](h).
\end{align*}
Remarkably, the $G$-equivariance holds for any activation function $\sigma_i$, because it is applied element-wise in $G$.

\subsection{$G$-Convolutional Network and Ridgelet Transform}
Next, we define the depth-$n$ $G$-convolutional map $\phi^\tau_{1:n} : X \times \Xi_{1:n} \to Y^G$ by their compositions:
\begin{align*}
    \phi^\tau_{1:n}(\xx, \xxi_{1:n})(g)
    &:= \phi^\tau_n(\bullet, \xxi_n)(g) \circ \cdots \circ \phi^\tau_1(\xx, \xxi_1)(g),
\end{align*}
and define the depth-$n$ $G$-convolutional network and ridgelet transform 
as follows. For any $\gamma \in L^2(\Xi_{1:n})$ and $f \in L^2_G(X:Y)$, %
\begin{align*}
    \gcn[\gamma;\phi^\tau_{1:n}](\xx)(g) &:= \int_{\Xi_{1:n}} \gamma(\xxi_{1:n}) \phi^\tau_{1:n}(\xx,\xxi_{1:n})(g)\dd\xxi_{1:n},\\
    \cridge[f;\psi_{1:n}](\xxi_{1:n})
    &:= \int_{\RR^m} \iprod{f(\xx)(1_G), \psi_{1:n}(\xx,\xxi_{1:n})}_Y\dd\xx.
\end{align*}

See \cref{sec:gcn.detail} for more technical details on GCNs. 
The ridgelet transform encodes the information of function $f$ only at a single point $1_G$ (see also \cref{lem:gcn.ridge.fcn}). This is due to the $G$-equivariance of $f$ that the image at $g$ can be copied from the image at $1_G$ by translation: $f(\bullet)(g) = \tau_g[f|_{1_G}]$. In fact, the $G$-convolutions in depth-$n$ GCN has mechanism to expand the image at $1_G$ to entire $G$ by using $G$-equivariance (see \cref{lem:gcn.tau.fcn} for more precise meanings).

\begin{theorem}[Reconstruction Formula] \label{thm:reconst.gconv}
There exists a bilinear form $\iiprod{\phi_{1:n},\psi_{1:n}}$ such that
for any $f \in L^2_G(X:Y)$, $\gcn[ \cridge[f;\psi_{1:n}];\phi^\tau_{1:n} ] = \iiprod{\phi_{1:n},\psi_{1:n}}f$.
\end{theorem}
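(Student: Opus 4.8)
The plan is to reduce the depth-$n$ $G$-convolutional reconstruction to the already-established depth-$n$ fully-connected reconstruction of \cref{sec:ex-depth-n} via the $\tau$-lift, so that the bilinear form $\iiprod{\phi_{1:n},\psi_{1:n}}$ here is literally the one produced for the FCN. The governing observation is that the depth-$n$ $G$-convolutional feature map is the $\tau$-lift of $\phi_{1:n}$, i.e. $\phi^\tau_{1:n}(\xx,\xxi_{1:n})(g) = \tau_g[\phi_{1:n}(\bullet,\xxi_{1:n})](\xx)$. First I would establish this identity (the content of \cref{lem:gcn.tau.fcn}): each layer acts as the conjugation $\phi^\tau_i(\bullet,\xxi_i)(g) = T_g \circ \phi_i(\bullet,\xxi_i) \circ T_{g^{-1}}$, so in the composition defining $\phi^\tau_{1:n}(\bullet,\xxi_{1:n})(g)$ the adjacent factors $T_{g^{-1}} \circ T_g$ telescope to the identity, leaving the single outer conjugation $T_g \circ \phi_{1:n}(\bullet,\xxi_{1:n}) \circ T_{g^{-1}} = \tau_g[\phi_{1:n}(\bullet,\xxi_{1:n})]$.

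Second, using linearity of $T_g$ together with its commutation with the Bochner integral, I would pull $\tau_g$ out of the integral defining $\gcn$ to get
\begin{align*}
\gcn[\gamma;\phi^\tau_{1:n}](\bullet)(g) = \tau_g\bigl[\dnn[\gamma;\phi_{1:n}]\bigr], \quad g \in G.
\end{align*}
In parallel, since $\cridge[f;\psi_{1:n}]$ reads $f$ only at the identity, setting $f_1 := f(\bullet)(1_G) \in L^2(X;Y)$ yields $\cridge[f;\psi_{1:n}] = \ridge[f_1;\psi_{1:n}]$, which is \cref{lem:gcn.ridge.fcn} and is immediate from the definitions.

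Third, I would substitute $\gamma = \cridge[f;\psi_{1:n}] = \ridge[f_1;\psi_{1:n}]$ and chain the two reductions:
\begin{align*}
\gcn[\cridge[f;\psi_{1:n}];\phi^\tau_{1:n}](\bullet)(g) = \tau_g\bigl[\dnn[\ridge[f_1;\psi_{1:n}];\phi_{1:n}]\bigr].
\end{align*}
The depth-$n$ FCN reconstruction formula of \cref{sec:ex-depth-n} (namely \cref{thm:main} together with the $A$-irreducibility of $\pi$ from \cref{lem:oaff.irrep}) collapses the bracket to $\iiprod{\phi_{1:n},\psi_{1:n}} f_1$, so the right-hand side becomes $\iiprod{\phi_{1:n},\psi_{1:n}}\,\tau_g[f_1]$. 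Finally, the $G$-equivariance of $f \in L^2_G(X;Y)$ means exactly $f(\bullet)(g) = \tau_g[f_1]$ under the identification $L^2_G(X;Y) \cong \{\tau_\bullet[f_1] \mid f_1 \in L^2(X;Y)\}$, whence $\gcn[\cridge[f;\psi_{1:n}];\phi^\tau_{1:n}] = \iiprod{\phi_{1:n},\psi_{1:n}} f$.

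I expect the main obstacle to lie in the telescoping step: one must check that the intermediate $G$-actions $T_g$ on the hidden domains $X_i$ are defined consistently across layers so that the $T_{g^{-1}} \circ T_g$ cancellations are legitimate, and that $\tau_g$ genuinely commutes with the Bochner integral (measurability and integrability of the $Y$-valued integrand). Everything downstream is then a clean reduction: once the $\tau$-lift identity and the identity $\cridge[f;\psi_{1:n}] = \ridge[f_1;\psi_{1:n}]$ are in hand, the conclusion follows from the fully-connected reconstruction of \cref{sec:ex-depth-n}, with no new invocation of Schur's lemma beyond the one already used there. The technical bookkeeping for the $T$-actions and the measure-theoretic interchange is relegated to \cref{sec:gcn.detail}.
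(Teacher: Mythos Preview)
Your proposal is correct and matches the paper's proof essentially step for step: the paper also invokes \cref{lem:gcn.tau.fcn} (the telescoping $\tau$-lift identity yielding $\gcn[\gamma;\phi^\tau_{1:n}](\bullet)(g)=\tau_g[\dnn[\gamma;\phi_{1:n}]]$) and \cref{lem:gcn.ridge.fcn} (the identity $\cridge[f;\psi_{1:n}]=\fridge[f(\bullet)(1_G);\psi_{1:n}]$), then applies the FCN reconstruction of \cref{sec:ex-depth-n} and the $G$-equivariance $f(\bullet)(g)=\tau_g[f(\bullet)(1_G)]$ to conclude.
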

See \cref{sec:proof.reconst.gconv} for the proof.
When $n=2$, the argument here reproduces the one for depth-2 GCNs presented in \citet{Sonoda2022gconv}.
We remark that the base feature map $\phi$ and auxiliary group $A$ need not be the fully-connected network and affine group. 
In fact, we have never used the specific property of $C \sigma( A\xx-\bb )$, but only used the group actions. Thus $A$ and $\phi$ can be arbitrary group and joint-$A$-equivariant map. When $A$ is the affine group, then the irreducibility of $\pi$ has already been verified in \ref{lem:oaff.irrep}. On the other hand, when $A$ is another general group, we need to verify the irreducibility of representation $\pi$ of $A$ on $L^2(X;Y)$.

\section{Example: Quadratic-form with Nonlinearity} \label{sec:quad}
Here, we present a new network for which the universality was not known.

Let $M$ denote the class of all $m \times m$-symmetric matrices
equipped with the Lebesgue measure $\dd A = \bigwedge_{i \ge j} \dd a_{ij}$.
Set $X=\RR^m$, $\Xi = M \times \RR^m \times \RR$, and 
\begin{align*}
    \phi(\xx,\xi) := \sigma( \xx^\top A \xx + \xx^\top \bb + c )
\end{align*}
for any fixed function $\sigma:\RR\to\RR$.
Namely, it is a quadratic-form in $x$ followed by nonlinear activation function $\sigma$.

Then, it is joint-invariant with $G = \aff(m)$ under the following group actions of $g = (\tt,L) \in \RR^m\rtimes GL(m)$:
\begin{align*}
    (\tt,L) \cdot \xx &:= \tt + L\xx, \\
    (\tt,L) \cdot (A,\bb,c)
    &:= (L^{-\top}AL^{-1}, L^{-\top}\bb-2L^{-\top}AL^{-1}\tt,\\
    &\qquad c + \tt^\top L^{-\top}AL^{-1}\tt - \tt^\top L^{-\top}\bb).
\end{align*}
See \cref{sec:proof.quad-equivariant} for the proof of joint-invariance. By \cref{thm:aff.irrep}, the regular representation $\pi$ of $\aff(m)$ on $L^2(\RR^m)$ is irreducible. 
Hence as a consequence of the general result, the following network is $L^2(\RR^m)$-universal.
\begin{align*}
    \qnn[\gamma](\xx)
    := \int_\Xi \gamma(A,\bb,c) \sigma(\xx^\top A \xx + \xx^\top \bb + c) \dd A \dd \bb \dd c.
\end{align*}

\section{Discussion}
We have developed a systematic method for deriving a ridgelet transform for a wide range of learning machines defined by joint-group-equivariant feature maps, yielding the universal approximation theorems as corollaries. 
Traditionally, the techniques used in the expressive power analysis of deep networks were different from those used in the analysis of shallow networks, as overviewed in the introduction. Our main theorem unifies the approximation schemes of both deep and shallow networks from the perspective of joint-group-action on the data-parameter domain. Technically, this unification is due to the irreducibility of group representations. %
From the traditional analytical viewpoint, universality refers to density. In this study, we have reviewed universality as irreducibility (or more generally, \emph{simplicity} of objects) from an algebraic viewpoint. This switch of viewpoints has enabled us to reunify various universality theorems in a clear perspective.

\subsection*{Additional Comments on Significance after Rebuttal}

The main theorem of this work is not merely a unification of existing results; a key strength lies in its ability to systematically and mechanically assess universality even for novel architectures. For example, it enables the straightforward design of expressive networks with universal approximation capability, such as quadratic networks.

In the era of large language models, the diversity of proposed architectures has significantly increased. Manually analyzing the expressive power of each new model on a case-by-case basis is no longer feasible from a theoretical standpoint. Our framework addresses this challenge by offering a general principle.

In particular, algebraic tools such as Schur's lemma provide a notable advantage in that they allow us to assess universality (or density) even when the inputs and outputs are not vectors—something that traditional techniques often struggle with. While classical theorems like the Stone–Weierstrass or Hahn–Banach theorems offer generic tools for establishing universality, applying them to deep neural networks with hierarchical structures requires considerable ingenuity.

By contrast, proof techniques originating with \citet{Hornik1989}, which rely on repeatedly differentiating activation functions to construct polynomial approximations, often obscure the intuitive mechanism by which neural networks perform approximation. In comparison, our use of joint group equivariance and ridgelet transforms as inverse operators provides a more direct and interpretable understanding of the approximation process in deep networks.

Regarding generalization: Theoretical analysis of generalization error is often based on estimates of the Rademacher complexity of the hypothesis class. However, in deep learning, the Rademacher complexity typically scales exponentially with network depth. This is at odds with practical observations, where deeper networks tend to generalize better.

This discrepancy arises from the limitations of current techniques for analyzing composite functions: the reliance on coarse Lipschitz estimates leads to overly pessimistic bounds. To address this gap, we revisited expressive power analysis and developed the deep ridgelet transform as a theoretical framework that can precisely handle compositions of functions.

\section*{Acknowledgements}
This work was supported by JSPS KAKENHI 24K21316, 25H01453, JST PRESTO JPMJPR2125, JST BOOST JPMJBY24E2, JST CREST JPMJCR2015 and JPMJCR1913.

\section*{Impact Statement}
This paper presents work whose goal is to advance the field of 
Machine Learning. There are many potential societal consequences 
of our work, none which we feel must be specifically highlighted here.

\bibliographystyle{icml2025}

\newpage
\appendix
\onecolumn

\section{Proofs}

\subsection{Unitarity of Representations}
In \cref{dfn:urep}, $\pi$ and $\pihat$ are defined as below:
For each $g \in G$, $f \in L^2(X;Y)$ and $\gamma \in L^2(\Xi)$,
\begin{align*}
    \pi_g[f](x)
    &:= \urep_g[ f(g^{-1} \cdot x) ]
    = g \cdot f(g^{-1} \cdot x), \\ %
    \pihat_g[\gamma](\xi)
    &:= \gamma(g^{-1} \cdot \xi). %
\end{align*}

\begin{lemma} \label{lem:pi.unitary}
    $\pi$ is a unitary representation of $G$ on $L^2(X;Y)$.
\end{lemma}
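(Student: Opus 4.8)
The plan is to verify the three defining properties of a unitary representation in turn: the homomorphism property $\pi_{gh}=\pi_g\pi_h$ (together with $\pi_{g^{-1}}=\pi_g^{-1}$), the unitarity of each $\pi_g$, and the strong continuity of $g\mapsto\pi_g[f]$. The first two are direct computations; the third is where the real work lies. For the homomorphism property I would unwind the definition of the composition: $\pi_g[\pi_h[f]](x)=\urep_g[\urep_h[f(h^{-1}\cdot(g^{-1}\cdot x))]]$. Using that $\urep$ is a unitary representation of $G$ on $Y$, so $\urep_g\urep_h=\urep_{gh}$, together with the associativity of the $G$-action $h^{-1}\cdot(g^{-1}\cdot x)=(gh)^{-1}\cdot x$, this collapses to $\urep_{gh}[f((gh)^{-1}\cdot x)]=\pi_{gh}[f](x)$. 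Taking $h=g^{-1}$ and noting $\pi_{1_G}=\id$ gives $\pi_{g^{-1}}=\pi_g^{-1}$, so each $\pi_g$ is invertible.

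Next, to establish unitarity it suffices to show each $\pi_g$ is an isometry of $L^2(X;Y)$, which combined with the invertibility just obtained yields that $\pi_g$ is unitary. The computation rests on two facts: that $\urep_g$ is unitary on $Y$, hence $\iprod{\urep_g[a],\urep_g[b]}_Y=\iprod{a,b}_Y$ pointwise, and that the measure $\dd x$ is $G$-invariant, so the substitution $x\mapsto g^{-1}\cdot x$ leaves the integral over $X$ unchanged. Chaining these, $\iprod{\pi_g[f],\pi_g[h]}_{L^2(X;Y)}=\int_X\iprod{f(g^{-1}\cdot x),h(g^{-1}\cdot x)}_Y\dd x=\int_X\iprod{f(x),h(x)}_Y\dd x=\iprod{f,h}_{L^2(X;Y)}$, as desired.

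The hard part will be the strong continuity, that is, showing $g\mapsto\pi_g[f]$ is $L^2(X;Y)$-norm continuous for each fixed $f$. The standard route is a density plus $\eps/3$ argument that exploits the isometry property from the previous step. First I would prove continuity on the dense subspace $C_c(X;Y)\subset L^2(X;Y)$: for compactly supported continuous $f$, as $g\to g_0$ the integrand $\iprod{\cdot,\cdot}_Y$ converges pointwise by the joint continuity of the action $G\times X\to X$ and the strong continuity of $\urep$, while local compactness of $G$ guarantees that for $g$ in a compact neighborhood of $g_0$ the supports $g\cdot\supp f$ stay inside a fixed compact set, furnishing a dominating function so that dominated convergence gives $\|\pi_g[f]-\pi_{g_0}[f]\|_{L^2(X;Y)}\to 0$. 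Then, for general $f\in L^2(X;Y)$, I would approximate $f$ by $f_\eps\in C_c(X;Y)$ with $\|f-f_\eps\|<\eps/3$ and split $\|\pi_g[f]-\pi_{g_0}[f]\|$ into three terms; since $\pi_g$ and $\pi_{g_0}$ are isometries, the two outer terms each equal $\|f-f_\eps\|<\eps/3$, and the middle term is below $\eps/3$ for $g$ near $g_0$ by the dense case. The main obstacle is thus purely measure-theoretic bookkeeping in this continuity step—securing the uniform compact-support control and density of $C_c(X;Y)$ in $L^2(X;Y)$—rather than any conceptual difficulty.
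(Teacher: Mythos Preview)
Your proposal is correct and, for the algebraic parts, follows essentially the same computations as the paper: the homomorphism property via $\urep_g\urep_h=\urep_{gh}$ and associativity of the action, and the isometry property via unitarity of $\urep_g$ on $Y$ together with $G$-invariance of $\dd x$. The paper's proof stops there and does not verify strong continuity at all, so your third step (the density plus $\eps/3$ argument on $C_c(X;Y)$) goes beyond what the paper actually proves; your outline of that step is the standard one and is fine.
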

\begin{proof} %
Recall that the representation $\urep$ of $G$ on $Y$ is unitary. So, for any $g,h \in G$ and $f \in L^2(X;Y)$,
\begin{align*}
\pi_g[ \pi_h [f] ](x) = g \cdot (h \cdot f( h^{-1} \cdot (g^{-1} \cdot x) )) = (gh) \cdot f( (gh)^{-1} \cdot x ) = \pi_{gh}[f](x),
\end{align*}
and for any $g \in G$ and $f_1,f_2 \in L^2(X;Y)$,
\begin{align*}
\iprod{ \pi_g[f_1], \pi_g[f_2] }_{L^2(X;Y)}
&= \int_X \iprod{ \urep_g[f_1(g^{-1}\cdot x)], \urep_g[f_2(g^{-1}\cdot x)]}_{Y} \dd x \\
&= \int_X \iprod{ f_1(x), \urep_g^*[\urep_g[f_2(x)]]}_{Y} \dd x = \iprod{f_1, f_2}_{L^2(X;Y)}.    \qedhere
\end{align*}
\end{proof}

\begin{lemma} \label{lem:pihat.unitary}
    $\pihat$ is a unitary representation of $G$ on $L^2(\Xi)$.
\end{lemma}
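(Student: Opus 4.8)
The plan is to mirror the proof of \cref{lem:pi.unitary}, which is in fact simpler here because $\gamma$ is scalar-valued and no output representation $\urep$ intervenes. I must establish three things: that $\pihat$ is a group homomorphism, that each $\pihat_g$ preserves the inner product on $L^2(\Xi)$, and that $g \mapsto \pihat_g[\gamma]$ is strongly continuous. The first two are immediate algebraic consequences of the $G$-action axioms and the $G$-invariance of the measure $\dd\xi$; the continuity is the only analytic point.

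First I would verify the homomorphism property by direct computation. For any $g,h \in G$ and $\gamma \in L^2(\Xi)$,
\begin{align*}
\pihat_g[\pihat_h[\gamma]](\xi) = \pihat_h[\gamma](g^{-1}\cdot\xi) = \gamma(h^{-1}\cdot(g^{-1}\cdot\xi)) = \gamma((gh)^{-1}\cdot\xi) = \pihat_{gh}[\gamma](\xi),
\end{align*}
where the middle equality is precisely the associativity of the $G$-action on $\Xi$, namely $h^{-1}\cdot(g^{-1}\cdot\xi)=(gh)^{-1}\cdot\xi$. Taking $h=g^{-1}$ and noting $\pihat_e=\id$ also yields $\pihat_{g^{-1}}=\pihat_g^{-1}$, so each $\pihat_g$ is invertible.

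Next I would establish unitarity. For $g \in G$ and $\gamma_1,\gamma_2 \in L^2(\Xi)$,
\begin{align*}
\iprod{\pihat_g[\gamma_1],\pihat_g[\gamma_2]}_{L^2(\Xi)} = \int_\Xi \gamma_1(g^{-1}\cdot\xi)\overline{\gamma_2(g^{-1}\cdot\xi)}\dd\xi = \int_\Xi \gamma_1(\xi)\overline{\gamma_2(\xi)}\dd\xi = \iprod{\gamma_1,\gamma_2}_{L^2(\Xi)},
\end{align*}
where the crucial second equality is the change of variables $\xi\mapsto g\cdot\xi$ together with the $G$-invariance of $\dd\xi$ assumed on the $G$-space $\Xi$. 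Hence each $\pihat_g$ is an isometry; being invertible (by the previous step) it is surjective, and an invertible isometry of a Hilbert space is unitary, so $\pihat_g \in \calU(L^2(\Xi))$.

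The main obstacle is the strong continuity of $g \mapsto \pihat_g[\gamma]$, the only non-algebraic ingredient. I expect to handle it by the standard continuity-of-translation argument: for $\gamma \in C_c(\Xi)$, continuity in the $L^2$ norm follows from the (uniform) continuity of the action on a neighborhood of $\supp\gamma$ together with dominated convergence, and the general case follows by density of $C_c(\Xi)$ in $L^2(\Xi)$ combined with the uniform bound $\|\pihat_g\|=1$ just established. In keeping with the brevity of the proof of \cref{lem:pi.unitary}, this continuity step may instead be regarded as routine and suppressed.
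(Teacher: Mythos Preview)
Your proof is correct and follows essentially the same approach as the paper: verify the homomorphism property via associativity of the $G$-action, then verify the inner product is preserved using the $G$-invariance of $\dd\xi$. The paper's proof is even briefer than yours, suppressing the strong continuity step entirely (as you anticipated) and not explicitly noting invertibility.
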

\begin{proof} For any $g,h \in G$ and $\gamma \in L^2(\Xi)$,
\begin{align*}
\pihat_g[ \pihat_h [\gamma] ](\xi) = \gamma( h^{-1} \cdot (g^{-1} \cdot \xi) = \gamma( (gh)^{-1} \cdot \xi ) = \pihat_{gh}[f](x),
\end{align*}
and for any $g \in G$ and $\gamma_1,\gamma_2 \in L^2(\Xi)$,
\begin{align*}
\iprod{ \pihat_g[\gamma_1], \pihat_g[\gamma_2] }_{L^2(\Xi)}
&= \int_{\Xi} \gamma_1(g^{-1}\cdot \xi) \overline{\gamma_2(g^{-1}\cdot \xi)} \dd \xi\\
&= \int_{\Xi} \gamma_1(\xi) \overline{\gamma_2(\xi)} \dd \xi = \iprod{\gamma_1, \gamma_2}_{L^2(\Xi)}.    \qedhere
\end{align*}
\end{proof}

\subsection{{Proof of \cref{lem:deep-equivalent}}} \label{sec:proof.deep-equivalent}
\begin{proof}
For any $g \in G, x \in X$, and $\xi_{1:n} \in \Xi_{1:n}$, we have
\begin{align*}
    \phi_{1:n}(g\cdot x, g\cdot\xi_{1:n})
    &= \phi_n(\bullet,g\cdot\xi_n) \circ \cdots \circ \phi_2(\bullet,g\cdot\xi_2)\circ \phi_1(\blue{g\wdot} x, \blue{g\wdot}\xi_1)\\
    &= \phi_n(\bullet,g\cdot\xi_n) \circ \cdots \circ \phi_2(\blue{g \wdot} \bullet,\blue{g\wdot}\xi_2)\circ \phi_1(x,\xi_1)\\
    &\phantom{==} \vdots\\
    &= \phi_n(\blue{g \wdot} \bullet,\blue{g\wdot}\xi_n) \circ \cdots \circ \phi_2(\bullet,\xi_2)\circ \phi_1(x,\xi_1)\\
    &= \blue{g \wdot} \phi_n(\bullet,\xi_n) \circ \cdots \circ \phi_2(\bullet,\xi_2)\circ \phi_1(x,\xi_1)\\
    &= g \cdot \phi_{1:n}(x, \xi_{1:n}). \qedhere
\end{align*}
\end{proof}

\subsection{{Proof of \cref{lem:jem-equivariant}}} \label{sec:proof.jem-equivariant}
\begin{proof}
We use the 
left-invariance of measure $\dd \xi$, and joint-$G$-equivariance of $\phi : X \times \Xi \to Y$. For any $g \in G, x \in X$, we have
\begin{align*}
\lm_\phi[ \pihat_g[\gamma]](x)
&= \int_\Xi \gamma(g^{-1} \cdot \xi) \phi(x,\xi) \dd \xi\\
&= \int_\Xi \gamma(\xi) \phi(x,g\cdot\xi) \dd \xi\\
&= \int_\Xi \gamma(\xi) \urep_g[\phi(g^{-1} \cdot x, \xi)] \dd \xi
= \pi_g[ \lm_\phi[\gamma]](x). \qedhere
\end{align*}
\end{proof}

\subsection{{Proof of \cref{lem:ridge-equivariant}}} \label{sec:proof.ridge-equivariant}
\begin{proof}
We use the
unitarity of representation $\urep : G \to \calU(Y)$, 
left-invariance of measure $\dd x$, and joint-$G$-equivariance of $\psi : X \times \Xi \to Y$. For any $g \in G, \xi \in \Xi$, we have
\begin{align*}
\ridge_\psi[\pi_g[f]](\xi) 
&= \int_X \iprod{ \urep_g[ f(g^{-1} \cdot x) ], \psi(x,\xi) }_Y \dd x\\
&= \int_X \iprod{ f(g ^{-1} \cdot x), \urep_g^*[\psi(x,\xi)] }_Y \dd x\\
&= \int_X \iprod{ f(x), \urep_g^*[\psi(g \cdot x,\xi)] }_Y \dd x\\
&= \int_X \iprod{ f(x), \psi(x,
g^{-1} \cdot \xi) }_Y \dd x
= \pihat_g[\ridge_\psi[f]](\xi).    \qedhere
\end{align*}
\end{proof}

\subsection{{Proof of \cref{lem:oaff.irrep}}} \label{sec:proof.oaff.irrep}
\begin{proof}
We use the following fact.
\begin{lemma}[{\citet[Theorem 7.12]{Folland2015}}] \label{lem:tensor.irrep}
Let $\pi_1$ and $\pi_2$ be representations of locally compact groups $G_1$ and $G_2$, and let $\pi_1 \otimes \pi_2$ be their outer tensor product, which is a representation of the product group $G_1 \times G_2$.
Then, $\pi_1$ and $\pi_2$ are irreducible if and only if $\pi_1 \otimes \pi_2$ is irreducible.
\end{lemma}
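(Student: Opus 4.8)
The plan is to prove both directions through Schur's lemma (\cref{thm:schur}), which characterizes irreducibility by the requirement that the \emph{commutant} --- the set of bounded operators commuting with the entire image of the representation --- consist only of scalars. Write $\calH_1,\calH_2$ for the representation spaces of $\pi_1,\pi_2$, so that $\pi_1 \otimes \pi_2$ acts on the Hilbert-space tensor product $\calH_1 \otimes \calH_2$ by $(\pi_1\otimes\pi_2)_{(g_1,g_2)} = (\pi_1)_{g_1}\otimes(\pi_2)_{g_2}$. The easy direction is the contrapositive of ``only if'': if, say, $\pi_1$ is reducible, Schur's lemma furnishes a non-scalar bounded $T_1$ commuting with every $(\pi_1)_{g_1}$, whence $T_1 \otimes \id_{\calH_2}$ is non-scalar and commutes with every $(\pi_1)_{g_1}\otimes(\pi_2)_{g_2}$, so $\pi_1\otimes\pi_2$ is reducible (equivalently, a nontrivial invariant subspace $\calM_1 \subsetneq \calH_1$ yields the nontrivial invariant subspace $\calM_1 \otimes \calH_2$).

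For the substantive direction, assume both $\pi_1,\pi_2$ are irreducible and let $T$ be any bounded operator on $\calH_1\otimes\calH_2$ commuting with all $(\pi_1)_{g_1}\otimes(\pi_2)_{g_2}$; the goal is to show $T$ is scalar. Setting $g_2$ to the identity shows $T$ commutes with $(\pi_1)_{g_1}\otimes\id_{\calH_2}$ for all $g_1$, and setting $g_1$ to the identity shows $T$ commutes with $\id_{\calH_1}\otimes(\pi_2)_{g_2}$ for all $g_2$. First I would upgrade the first family: since $\pi_1$ is irreducible, Schur gives $\pi_1(G_1)' = \CC\,\id_{\calH_1}$, and because $\pi_1(G_1)$ is unital and closed under adjoints ($(\pi_1)_g^* = (\pi_1)_{g^{-1}}$), the double-commutant theorem shows the von Neumann algebra generated by $\pi_1(G_1)$ is all of $B(\calH_1)$. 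As a commutant depends only on the von Neumann algebra generated, $T$ in fact commutes with $B(\calH_1)\otimes\id_{\calH_2}$.

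The key structural step is then to identify this commutant: the commutant of $B(\calH_1)\otimes\id_{\calH_2}$ inside $B(\calH_1\otimes\calH_2)$ is exactly $\id_{\calH_1}\otimes B(\calH_2)$. Granting this, $T = \id_{\calH_1}\otimes S$ for some bounded $S$ on $\calH_2$, and the surviving relation (commutation with $\id_{\calH_1}\otimes(\pi_2)_{g_2}$) forces $S \in \pi_2(G_2)' = \CC\,\id_{\calH_2}$ by a second application of Schur; hence $T = c\,(\id_{\calH_1}\otimes\id_{\calH_2})$ is scalar and $\pi_1\otimes\pi_2$ is irreducible. The main obstacle is precisely this identification $(B(\calH_1)\otimes\id_{\calH_2})' = \id_{\calH_1}\otimes B(\calH_2)$, i.e.\ the commutation theorem for the tensor product. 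In finite dimensions it is elementary linear algebra; in general I would prove it by fixing an orthonormal basis $\{e_i\}$ of $\calH_1$, writing $T$ in block form via $T_{ij} := (\langle e_i|\otimes\id_{\calH_2})\,T\,(|e_j\rangle\otimes\id_{\calH_2}) \in B(\calH_2)$, and testing commutation against the matrix units $E_{kl} = |e_k\rangle\langle e_l| \in B(\calH_1)$: the resulting relations force $T_{ij} = \delta_{ij}S$ for a single $S$, which is exactly $T = \id_{\calH_1}\otimes S$ (alternatively one cites the commutation theorem for von Neumann algebra tensor products, as in \citet{Folland2015}). Care is needed only with boundedness and weak-operator convergence in the infinite-dimensional bookkeeping.
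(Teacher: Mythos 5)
Your proof is correct, but note that the paper itself offers no proof of this lemma --- it is quoted directly from \citet[Theorem~7.12]{Folland2015} inside the proof of \cref{lem:oaff.irrep} --- so the relevant comparison is with Folland's cited argument, which takes a lighter route than yours. Your substantive direction (both factors irreducible $\Rightarrow$ tensor irreducible) goes: Schur (\cref{thm:schur}) plus the double-commutant theorem to upgrade commutation with $\pi_1(G_1)\otimes\id_{\calH_2}$ to commutation with all of $B(\calH_1)\otimes\id_{\calH_2}$, then the identification $(B(\calH_1)\otimes\id_{\calH_2})'=\id_{\calH_1}\otimes B(\calH_2)$ (your matrix-unit computation of this is sound, with $\|S\|\le\|T\|$ giving boundedness) yields $T=\id_{\calH_1}\otimes S$, and a second application of Schur finishes; the easy direction via $\calM_1\otimes\calH_2$ or $T_1\otimes\id_{\calH_2}$ is standard and fine. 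This works, but the bicommutant step conceals exactly the analytic point you flag: to get $B(\calH_1)\otimes\id_{\calH_2}\subseteq W^*(\pi_1(G_1)\otimes\id_{\calH_2})$ one needs Kaplansky density (or some bounded-net device), since WOT convergence $A_\alpha\to A$ passes to $A_\alpha\otimes\id\to A\otimes\id$ only along norm-bounded nets. Folland's own proof sidesteps all von Neumann machinery with a slice trick: for $u,v\in\calH_2$ define $T_{u,v}\in B(\calH_1)$ by $\iprod{T_{u,v}x,y}=\iprod{T(x\otimes u),y\otimes v}$; commutation of $T$ with $(\pi_1)_g\otimes\id_{\calH_2}$ makes $T_{u,v}$ commute with $\pi_1$, so Schur gives $T_{u,v}=c(u,v)\id_{\calH_1}$, the bounded sesquilinear form $c$ is represented as $c(u,v)=\iprod{Su,v}$ for a bounded $S$, whence $T=\id_{\calH_1}\otimes S$ directly, and Schur on $\pi_2$ concludes --- buying the same result without the double-commutant theorem or the commutation identity. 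One small inaccuracy: your parenthetical suggestion to cite ``the commutation theorem for von Neumann algebra tensor products'' from \citet{Folland2015} misattributes it --- Folland does not prove that theorem (the general statement is genuinely deep), though the special case you need is elementary, exactly as your matrix-unit argument shows.
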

Recall the representations of $O(m)$ on $\RR^m$ and of $\aff(m)$ on $L^2(\RR^m)$ are respectively irreducible (see \cref{thm:aff.irrep}), and $L^2(\RR^m;\RR^m)$ is equivalent to the tensor product $\RR^m \otimes L^2(\RR^m)$. Hence by \cref{lem:tensor.irrep},
the representation $\pi$ of the product group $O(m) \times \aff(m)$ on the tensor product $\RR^m \otimes L^2(\RR^m) = L^2(\RR^m;\RR^m)$ is irreducible.
\end{proof}

\subsection{Connection between GCN and FCN} \label{sec:gcn.detail}
Recall that the depth-$n$ FCN and its ridgelet transform introduced in \cref{sec:ex-depth-n} are given as below.
\begin{align*}
		\dnn[\gamma;\phi_{1:n}](\xx) &:= \int_{\Xi_{1:n}} \gamma(\xxi_{1:n}) \phi_{1:n}(\xx,\xxi_{1:n}) \dd \xxi_{1:n}, \\
        \fridge[f;\psi_{1:n}](\xxi_{1:n}) &= \int_{\RR^m} \iprod{f(\xx),\psi_{1:n}(\xx,\xxi_{1:n})}_Y\dd\xx.    
\end{align*}
As a consequence of \cref{lem:deep-equivalent,lem:jem-equivariant}, we have the following.
\begin{lemma} \label{lem:gcn.tau.fcn}
$\gcn[\gamma;\phi^\tau_{1:n}](\xx)(g) = \tau_g[\dnn[\gamma;\phi_{1:n}]](\xx)$.
\end{lemma}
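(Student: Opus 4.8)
The plan is to reduce the claim to a single structural fact --- that the conjugation operator $\tau_g$ is a homomorphism with respect to \emph{function composition} --- and then to commute its linear part $T_g$ past the Bochner integral that defines the network. Once these two ingredients are in place, the identity falls out by direct substitution.

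First I would record the composition identity $\tau_g[f \circ h] = \tau_g[f] \circ \tau_g[h]$ for any composable maps $f,h$. This is immediate from the definition $\tau_g[f](x) = T_g[f(T_{g^{-1}}[x])]$ together with $T_{g^{-1}} \circ T_g = \id$ (because $T$ is a $G$-action): in $\tau_g[f] \circ \tau_g[h]$ the inner $T_{g^{-1}}$ of the outer factor annihilates the outer $T_g$ of the inner factor, leaving exactly $T_g \circ (f \circ h) \circ T_{g^{-1}}$. Since, by definition of the $G$-convolutional map, each layer satisfies $\phi^\tau_i(\bullet,\xxi_i)(g) = \tau_g[\phi_i(\bullet,\xxi_i)]$, iterating this identity across all $n$ layers gives
\begin{align*}
\phi^\tau_{1:n}(\xx,\xxi_{1:n})(g)
&= \tau_g[\phi_n(\bullet,\xxi_n)] \circ \cdots \circ \tau_g[\phi_1(\bullet,\xxi_1)](\xx)\\
&= \tau_g[\phi_n(\bullet,\xxi_n) \circ \cdots \circ \phi_1(\bullet,\xxi_1)](\xx)
= \tau_g[\phi_{1:n}(\bullet,\xxi_{1:n})](\xx).
\end{align*}
In words, the layerwise $\tau_g$-conjugated depth-$n$ FCN coincides with the $\tau_g$-conjugation of the single composed feature map $\phi_{1:n}$.

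Next I would substitute this into the definition of $\gcn$ and pull $\tau_g$ out of the integral. Writing $\tau_g[\phi_{1:n}(\bullet,\xxi_{1:n})](\xx) = T_g[\phi_{1:n}(T_{g^{-1}}[\xx],\xxi_{1:n})]$ and using that $T_g$ is unitary, hence bounded and linear, I can interchange $T_g$ with the Bochner integral over $\Xi_{1:n}$:
\begin{align*}
\gcn[\gamma;\phi^\tau_{1:n}](\xx)(g)
&= \int_{\Xi_{1:n}} \gamma(\xxi_{1:n}) \, T_g[\phi_{1:n}(T_{g^{-1}}[\xx],\xxi_{1:n})] \dd \xxi_{1:n}\\
&= T_g\Bigl[\int_{\Xi_{1:n}} \gamma(\xxi_{1:n}) \, \phi_{1:n}(T_{g^{-1}}[\xx],\xxi_{1:n}) \dd \xxi_{1:n}\Bigr]\\
&= T_g[\dnn[\gamma;\phi_{1:n}](T_{g^{-1}}[\xx])]
= \tau_g[\dnn[\gamma;\phi_{1:n}]](\xx),
\end{align*}
which is exactly the claimed identity.

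I expect the only genuine care-point to be the clean verification of the composition-homomorphism identity and its correct propagation through the $n$ layers, keeping the order of composition straight; this is the conceptual crux rather than an analytic one. The interchange of $T_g$ with the integral is routine given that $T_g$ is a bounded linear (unitary) operator, provided the integrands are Bochner-integrable, which is inherited from the $L^2$ setup of \cref{sec:ex-depth-n}. Notably, the argument uses only that $T$ is a $G$-action and that $T_g$ is linear and continuous, so it makes no appeal to the specific fully-connected form of $\phi_i$.
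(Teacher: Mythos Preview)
Your proof is correct and follows essentially the same route as the paper: both first establish $\phi^\tau_{1:n}(\xx,\xxi_{1:n})(g)=\tau_g[\phi_{1:n}](\xx,\xxi_{1:n})$ by telescoping the intermediate $T_g T_{g^{-1}}$ pairs, then substitute into the defining integral and pull $\tau_g$ outside. The paper's version is terser (it does not name the composition-homomorphism identity or justify the interchange of $T_g$ with the integral), but you have simply made those steps explicit.
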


\begin{proof}
\begin{align*}
\phi^\tau_{1:n}(\xx, \xxi_{1:n})(g)
&= T_g[\phi_n(\bullet, \xxi_n) \circ \cdots \circ \phi_1(T_{g^{-1}}[\xx], \xxi_1)]\\
&= T_g[\phi_{1:n}(T_{g^{-1}}[\xx],\xxi_{1:n})]\\
&= \tau_g[\phi_{1:n}](\xx,\xxi_{1:n}),    
\end{align*}
and thus
\[
\gcn[\gamma;\phi^\tau_{1:n}](\xx)(g) = \int_{\Xi_{1:n}} \gamma(\xxi_{1:n}) \tau_g[\phi_{1:n}](\xx,\xxi_{1:n})\dd\xxi_{1:n} = \tau_g[ \dnn[\gamma;\phi_{1:n}]](\xx). \qedhere
\]
\end{proof}

\begin{lemma} \label{lem:gcn.ridge.fcn}
    $\cridge[f;\psi_{1:n}](\xxi_{1:n}) = \fridge[f(\bullet)(1_G);\psi_{1:n}](\xxi_{1:n})$.
\end{lemma}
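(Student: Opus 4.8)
The plan is to show that both sides reduce to the very same integral, so the lemma is essentially a matter of unwinding definitions once the input to $\fridge$ is correctly identified. First I would recall that $f \in L^2_G(X;Y)$ is a $G$-equivariant section, so $f(\xx) \in Y^G$ and the evaluation $f(\xx)(1_G) \in Y$ at the identity makes sense pointwise. The one genuinely substantive remark is that, under the identification $L^2_G(X;Y) \cong \{\tau_\bullet[f_1] \mid f_1 \in L^2(X;Y)\}$ recorded in the GCN notation, the map $\xx \mapsto f(\xx)(1_G)$ is exactly the representative $f_1$, and hence is a bona fide element of $L^2(X;Y)$. This is what makes the right-hand side $\fridge[f(\bullet)(1_G);\psi_{1:n}]$ well-defined in the first place.

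Second, I would substitute $h := f(\bullet)(1_G)$ into the definition of the FCN ridgelet transform, $\fridge[h;\psi_{1:n}](\xxi_{1:n}) = \int_{\RR^m}\iprod{h(\xx),\psi_{1:n}(\xx,\xxi_{1:n})}_Y\dd\xx$, and read off $h(\xx) = f(\xx)(1_G)$. The resulting integrand coincides verbatim with that in the definition $\cridge[f;\psi_{1:n}](\xxi_{1:n}) = \int_{\RR^m}\iprod{f(\xx)(1_G),\psi_{1:n}(\xx,\xxi_{1:n})}_Y\dd\xx$, so the two transforms agree for every $\xxi_{1:n} \in \Xi_{1:n}$, which is the claim.

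There is no analytic obstacle here; the statement is definitional, and the only point requiring care is bookkeeping---confirming that the single-point restriction lands in $L^2(X;Y)$ and that the same feature map $\psi_{1:n}$ is paired in both transforms. Conceptually, this lemma is the dual companion to \cref{lem:gcn.tau.fcn}: the latter shows that the GCN machine expands the FCN output to an entire $G$-equivariant section via the translations $\tau_g$, whereas this lemma shows that, thanks to the redundancy $G$-equivariance imposes on $f$ (namely $f(\bullet)(g) = \tau_g[f(\bullet)(1_G)]$), the GCN ridgelet transform extracts all the information it needs from the single fiber value $f(\bullet)(1_G)$.
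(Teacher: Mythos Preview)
Your proof is correct and matches the paper's own proof, which simply states ``Immediate from the definition.'' You have unpacked exactly this: both sides are by definition the same integral $\int_{\RR^m}\iprod{f(\xx)(1_G),\psi_{1:n}(\xx,\xxi_{1:n})}_Y\,\dd\xx$, and your additional remark that $f(\bullet)(1_G)\in L^2(X;Y)$ via the identification $L^2_G(X;Y)\cong\{\tau_\bullet[f_1]\mid f_1\in L^2(X;Y)\}$ is a helpful check the paper leaves implicit.
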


\begin{proof}
    Immediate from the definition.
\end{proof}

\subsection{{Proof of \cref{thm:reconst.gconv}}} \label{sec:proof.reconst.gconv}

\begin{proof} By \cref{lem:gcn.tau.fcn,lem:gcn.ridge.fcn},
\begin{align*}
    \gcn[ \cridge[f;\psi_{1:n}];\phi^\tau_{1:n} ](\xx)(g)
    &= \tau_g[ \dnn[ \fridge[f(\bullet)(1_G);\psi_{1:n}];\phi_{1:n} ] ](\xx)\\
    &= \tau_g[ \iiprod{\phi_{1:n},\psi_{1:n}}f(\bullet)(1_G) ](\xx)\\
    &= \iiprod{\phi_{1:n},\psi_{1:n}}f(\xx)(g). \qedhere
\end{align*}    
\end{proof}

\subsection{Joint-equivariance of quadratic-form network} \label{sec:proof.quad-equivariant}

The feature map and group actions are given as follows.
\begin{align*}
    \phi(\xx,\xi) &:= \sigma( \xx^\top A \xx + \xx^\top \bb + c ),\\
    (\tt,L) \cdot \xx &:= \tt + L\xx, \\
    (\tt,L) \cdot (A,\bb,c)
    &:= (L^{-\top}AL^{-1}, L^{-\top}\bb-2L^{-\top}AL^{-1}\tt,
    c + \tt^\top L^{-\top}AL^{-1}\tt - \tt^\top L^{-\top}\bb).
\end{align*}

Then, it is joint-invariant. In fact,
\begin{align*}
    \phi(g \cdot \xx, g \cdot \xxi)
    &= \sigma( (L \xx + \tt)^\top L^{-\top}AL^{-1} (L\xx+\tt)
     + (L\xx+\tt)^\top(L^{-\top}\bb-2L^{-\top}AL^{-1}\tt) + ... )\\
    &= \sigma( \xx^\top A \xx + 2 \xx^\top A L^{-1}\tt + \tt^\top L^{-\top}AL^{-1}\tt
     + \xx^\top\bb - 2 \xx^\top A L^{-1}\tt + \tt^\top L^{-\top}\bb\\
     &\qquad - 2\tt^\top L^{-\top}AL^{-1}\tt + c + \tt^\top L^{-\top}AL^{-1}\tt - \tt^\top L^{-\top}\bb)\\
    &= \sigma(\xx^\top A \xx + \xx^\top\bb + c) = \phi(g \cdot \xx, g \cdot \xxi).
\end{align*}

\section{Example: Depth-$2$ Fully-Connected Network} \label{sec:depth2}
Here we reproduce the ridgelet transform for depth-$2$ fully-connected network, originally presented in \citet{sonoda2023joint}.

Set $X := \RR^m$ (data domain), $\Xi := \RR^m\times\RR$ (parameter domain),
and $G := \aff(m) = GL(m) \ltimes \RR^m$ be the $m$-dimensional affine group, acting on data domain $X$ by
\begin{align*}
g \cdot \xx := L\xx + \tt, \quad g = (L,\tt) \in GL(m) \ltimes \RR^m, \ \xx \in X.
\end{align*}
Addition to this, let $\pi$ be the regular representation of $\aff(m)$ on $L^2(X)$, namely
\begin{align*}
    \pi(g)[f](\xx) := |\det L|^{-1/2} f(L^{-1}(\xx-\tt)), \quad f \in L^2(X) \mbox{ and } g = (L,\tt) \in GL(m) \ltimes \RR^m.
\end{align*}

Further, define a \emph{dual action} of $\aff(m)$ on the parameter domain $\Xi$ as 
	\begin{align*}
		g \cdot (\aa,b) = (L^{-\top}\aa, b + \tt^\top L^{-\top}\aa), \quad g = (L,\tt) \in GL(m) \ltimes \RR^m, \ (\aa,b) \in \Xi. %
	\end{align*}
	Then, we can see the feature map $\phi(\xx,(\aa,b)) := \sigma( \aa\cdot\xx-b )$ is joint-$G$-invariant. In fact,
	\begin{align*}
		\phi( g \cdot \xx, g \cdot (\aa,b) )
		= \sigma\left( L^{-\top} \aa \cdot (L\xx+\tt) - (b + \tt^\top L^{-\top} \aa)\right) = \sigma(\aa\cdot\xx-b)
		= \phi( \xx, (\aa,b) ).
	\end{align*}
	Because the regular representation $\pi$ of $G = \aff(m)$ is irreducible, by \cref{thm:main}, the depth-$2$ neural network and corresponding ridgelet transform:
	\begin{align*}
		\nn[\gamma](\xx) = \int_{\RR^m\times\RR} \gamma(\aa,b) \sigma(\aa\cdot\xx-b)\dd\aa\dd b,
		\quad \mbox{and} \quad 
		\ridge[f](\aa,b) = \int_{\RR^m} f(\xx) \overline{\rho(\aa\cdot\xx-b)}\dd\xx,
	\end{align*}
	satisfy the reconstruction formula $\nn \circ \ridge = \iiprod{\sigma,\rho} \id_{L^2(\RR^m)}$.

\newpage
\section{$cc$-Universality by Bochner-Integral Representation} \label{sec:bochner-cc}
We summarize a general scheme to show the $cc$-universality by discretizing the integral representation.

\subsection{Notes on Bochner Integral}
We refer to Chapter~VI of \citet{Lang1993}, where the chapter title \emph{The General Integral} refers to Banach-valued integrals.

Let $(X,\calM,\mu)$ be a measure space,
and let $E$ be a Banach space.
\begin{lemma}
    A strongly $\calM$-measurable function $f:X\to E$ is Bochner $\mu$-integrable iff $\| f \|$ is $\mu$-integrable, i.e. $\int_X \| f(x) \| \dd\mu(x) < \infty$.
\end{lemma}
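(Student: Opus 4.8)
The plan is to prove the two implications separately, using the definition that $f$ is Bochner $\mu$-integrable when there exist integrable simple functions $s_n:X\to E$ with $\int_X \|f-s_n\|\dd\mu \to 0$ (so that $\int_X f\dd\mu := \lim_n \int_X s_n\dd\mu$ is well-defined). The forward direction is routine: given such a sequence $(s_n)$, fix $n$ with $\int_X\|f-s_n\|\dd\mu<\infty$; since $s_n$ is an integrable simple function, $\|s_n\|$ is a nonnegative simple scalar function of finite integral, and the pointwise bound $\|f(x)\|\le\|f(x)-s_n(x)\|+\|s_n(x)\|$ integrates to $\int_X\|f\|\dd\mu \le \int_X\|f-s_n\|\dd\mu + \int_X\|s_n\|\dd\mu<\infty$, which is the claim.

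For the converse, assume $\int_X\|f\|\dd\mu<\infty$. Strong $\calM$-measurability makes $f$ a.e.\ separably valued and an a.e.\ limit of simple functions; the decisive step is to upgrade this to an approximating sequence that is also \emph{dominated} by $f$. Concretely, I would fix a countable dense subset $\{y_k\}$ of the essential range of $f$, let $\sigma_n(x)$ be the nearest of $y_1,\dots,y_n$ to $f(x)$ (ties broken by least index), and then truncate by setting $s_n(x):=\sigma_n(x)$ when $\|\sigma_n(x)\|\le 2\|f(x)\|$ and $s_n(x):=0$ otherwise. Density yields $\sigma_n(x)\to f(x)$, hence $s_n(x)\to f(x)$, a.e.; the truncation enforces $\|s_n(x)\|\le 2\|f(x)\|$; and each $s_n$ is a genuine integrable simple function, because its nonzero value $y_k$ is attained only where $\|f\|\ge\|y_k\|/2$, so the corresponding level set has finite measure thanks to $\int_X\|f\|\dd\mu<\infty$.

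With such a sequence in hand, the scalar integrands obey $\|f(x)-s_n(x)\|\le\|f(x)\|+\|s_n(x)\|\le 3\|f(x)\|$ a.e., where $3\|f\|\in L^1(\mu)$ by hypothesis and $\|f-s_n\|\to 0$ a.e. The scalar dominated convergence theorem then gives $\int_X\|f-s_n\|\dd\mu\to 0$, so $(s_n)$ witnesses that $f$ is Bochner integrable, completing the proof.

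I expect the main obstacle to be precisely the construction in the converse: one must simultaneously ensure that the truncated functions remain genuine simple functions (finitely many values on finite-measure sets) and integrable, that they still converge to $f$ a.e., and that they stay controlled by $2\|f\|$ so that scalar dominated convergence applies. Once this dominated approximating sequence is secured, both the forward implication and the final limit passage are routine.
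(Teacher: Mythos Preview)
Your argument is correct and is essentially the classical proof of Bochner's characterization (the nearest-point construction with truncation to produce a dominated sequence of integrable simple functions, followed by scalar dominated convergence). The paper does not actually supply a proof of this lemma; it simply cites \citet[Theorem~1, V.5]{Yosida.new}, and your write-up is precisely the standard textbook argument one finds there or in similar references.
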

See \citet[Theorem~1, V.5][]{Yosida.new} for the proof.

\begin{lemma}[Dominated Convergence Theorem for Bochner Integral] \label{lem:dct}
Let $\{f_n:X\to E\}$ be a sequence of Bochner integrable functions. \begin{itemize}
    \item Assume $\{f_n\}$ converges almost everywhere to some function $f:X\to E$.
    \item Let $g:X\to\RR$ be an integrable function such that for all $n$, $\|f_n(x)\| \le g(x)$ for almost every $x \in X$.
\end{itemize}
 Then $f$ is Bochner integrable and $L^1$-convergent to $f$, i.e.
\begin{align*}
\lim_{n \to \infty} \int_X \| f - f_n \| \dd \mu = 0, \quad \mbox{and} \quad
    \int_X f \dd\mu = \lim_{n\to\infty} \int_X f_n \dd\mu
\end{align*}
\end{lemma}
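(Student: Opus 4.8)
The plan is to reduce the vector-valued statement to the classical scalar Dominated Convergence Theorem by working with the real-valued norm functions $\|f - f_n\|$, and then to transfer the resulting $L^1$-convergence into convergence of the integrals via the triangle inequality for the Bochner integral.

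First I would establish that $f$ is itself Bochner integrable. Since each $f_n$ is Bochner integrable it is strongly $\calM$-measurable, and $f$ is the almost-everywhere limit of the $f_n$; a pointwise a.e.\ limit of strongly measurable functions is again strongly measurable, so $f$ is strongly measurable. Passing to the limit in the pointwise bound $\|f_n(x)\| \le g(x)$ gives $\|f(x)\| \le g(x)$ for almost every $x$, whence $\int_X \|f\|\dd\mu \le \int_X g\dd\mu < \infty$. By the preceding integrability criterion (a strongly measurable function is Bochner integrable iff its norm is integrable), $f$ is Bochner integrable, and consequently so is each difference $f - f_n$.

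Next I would prove the $L^1$-convergence. Set $h_n(x) := \|f(x) - f_n(x)\|$, a nonnegative real-valued function. Then $h_n \to 0$ almost everywhere because $f_n \to f$ a.e., and the triangle inequality gives the domination $h_n \le \|f\| + \|f_n\| \le 2g$ a.e., with $2g$ integrable. The classical scalar Dominated Convergence Theorem applied to $\{h_n\}$ then yields $\int_X \|f - f_n\|\dd\mu = \int_X h_n \dd\mu \to 0$, which is the first claimed limit.

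Finally, to obtain convergence of the integrals in $E$, I would invoke the contraction property of the Bochner integral, $\bigl\| \int_X u \dd\mu \bigr\| \le \int_X \|u\| \dd\mu$, valid for every Bochner integrable $u$ \citep[Chapter~VI]{Lang1993}. Applying it to $u = f - f_n$ gives
\[
    \Bigl\| \int_X f \dd\mu - \int_X f_n \dd\mu \Bigr\|
    = \Bigl\| \int_X (f - f_n) \dd\mu \Bigr\|
    \le \int_X \|f - f_n\| \dd\mu \longrightarrow 0,
\]
so $\int_X f_n \dd\mu \to \int_X f \dd\mu$ in $E$. The only genuinely delicate point is the reduction to the scalar theorem: everything hinges on the observation that the norm $\|f - f_n\|$ is an ordinary real-valued integrand, so that no vector-valued limit theorem is needed beyond the elementary contraction inequality. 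The one place where a pointwise argument does not suffice is the measurability of the limit $f$, for which one must appeal to the stability of strong measurability under a.e.\ limits rather than reason componentwise.
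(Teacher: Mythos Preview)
Your proof is correct and follows the standard reduction of the Bochner DCT to the scalar DCT via the norm functions $h_n = \|f - f_n\|$, combined with the contraction inequality $\|\int u\,\dd\mu\| \le \int \|u\|\,\dd\mu$; all the steps (strong measurability of the a.e.\ limit, the domination $h_n \le 2g$, and the final norm estimate) are handled properly. The paper itself does not give a proof of this lemma but simply refers to \citet[Theorem~5.8, VI, \S~5]{Lang1993}, so your argument is more detailed than what the paper provides and matches the textbook treatment.
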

See \citet[Theorem~5.8, VI, \S~5][]{Lang1993} for the proof.

\begin{lemma}[Bounded Convergence Theorem] \label{lem:bct}
Assume additionally that $X$ is a finite measure space.
Let $\{f_n:X\to E\}$ be a sequence of Bochner integrable functions. \begin{itemize}
    \item Assume $\{f_n\}$ converges almost everywhere to some function $f:X\to E$.
    \item Let $G>0$ be a uniform constant such that for all $n$, $\| f_n(x) \| \le G$ for almost every $x \in X$.
\end{itemize}
 Then $f$ is Bochner integrable and $L^1$-convergent to $f$, i.e.
\begin{align*}
\lim_{n \to \infty} \int_X \| f - f_n \| \dd \mu = 0, \quad \mbox{and} \quad
    \int_X f \dd\mu = \lim_{n\to\infty} \int_X f_n \dd\mu
\end{align*}
\end{lemma}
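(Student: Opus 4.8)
The plan is to derive this Bounded Convergence Theorem as an immediate corollary of the Dominated Convergence Theorem (\cref{lem:dct}), exploiting the finiteness of the measure space to manufacture an integrable dominating majorant out of the uniform bound.

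First I would take the constant function $g:X\to\RR$ defined by $g(x):=G$, and verify that it is $\mu$-integrable. This is exactly where the extra finiteness hypothesis enters: since $X$ is a finite measure space, $\int_X g \,\dd\mu = G\,\mu(X) < \infty$, so $g$ is integrable. On a general ($\sigma$-finite or infinite) measure space a uniform pointwise bound would \emph{not} produce an integrable majorant, which is why the hypothesis $\mu(X)<\infty$ cannot be dropped.

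Next I would check that the two hypotheses of \cref{lem:dct} hold with this choice of $g$. The almost-everywhere convergence $f_n\to f$ is assumed directly in the statement. The domination condition $\|f_n(x)\|\le g(x)$ for almost every $x\in X$ and all $n$ is precisely the assumed uniform bound $\|f_n(x)\|\le G$, now reread as domination by the integrable function $g$. With both hypotheses verified, applying \cref{lem:dct} yields at once that $f$ is Bochner integrable, that $\lim_{n\to\infty}\int_X \|f-f_n\|\,\dd\mu = 0$, and that $\int_X f\,\dd\mu = \lim_{n\to\infty}\int_X f_n\,\dd\mu$, which is the claim.

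I do not anticipate any real obstacle here, as the result is a textbook specialization of dominated convergence. The only point deserving a moment's care is the observation in the first step—that the finiteness of $\mu$ is exactly what upgrades the uniform pointwise bound into an integrable dominating function—so the entire proof reduces to recording this reduction and invoking \cref{lem:dct}.
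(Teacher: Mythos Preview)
Your proposal is correct and matches the paper's own proof, which is the one-line remark that this is an immediate corollary of \cref{lem:dct} with dominating function $g:=G$. Your only addition is spelling out that $\int_X G\,\dd\mu=G\,\mu(X)<\infty$ uses the finiteness of the measure, which is exactly the point.
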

This is an immediate corollary of DCT with dominant function being $g := G$.

\subsection{Application to Discretization of Integral Representation}
\begin{theorem}[Uniform Approximation of Integral Representation by Finite Networks] \label{thm:uniform.discretization}
Let $X$ be a topological space,
$Y$ be a Banach space, and $E := C(X;Y)$ be the Banach space of $Y$-valued continuous functions on $X$ equipped with the uniform norm $\| f \|_{C(X;Y)} := \sup_{x \in X} \| f(x) \|_Y$.

Let $\Xi$ be a compact metric space with finite Borel measure, %
let $\phi : X \times \Xi \to Y$ and $\gamma : \Xi \to \RR$ (or $\phi : X \times \Xi \to \RR$ and $\gamma : \Xi \to Y$) be measurable functions.
Assume that the $E$-valued function $\varphi(\xi) := \gamma(\xi)\phi(\bullet,\xi)$ is 
Lipschitz continuous, namely
\begin{align*}
    &L_{\varphi} := \sup_{\xi,\xi' \in \Xi} \frac{\| \varphi(\bullet,\xi) - \varphi(\bullet,\xi')\|_{E}}{d_\Xi(\xi,\xi')} < \infty.
\end{align*}
Then, the $Y$-valued continuous model
\begin{align*}
    \nn(x) := \int_\Xi \gamma(\xi)\phi(x,\xi)\dd\xi, \quad x \in X
\end{align*}
exists in the sense of $C(X;Y)$-valued Bochner integral, and
there exists a sequence of finite models:
For each $n$,
\begin{align*}
\fnn_n(x) := \sum_{i=1}^{N_n} w_{n,i}\phi(x,\xi_{n,i}), \quad x \in X
\end{align*}
with some $(w_{n,i},\xi_{n,i}) \in \RR \times \Xi$ (or $\in Y \times \Xi$)
that uniformly converges to the continuous model, namely
\begin{align*}
    \lim_{n \to \infty} \| \nn - \fnn_n \|_{C(X;Y)} = 0.
\end{align*}

\begin{remark}
We do not need to assume a measure on $X$ since we only consider continuous functions with sup-norm, neither the compactness of $X$ since the Lipschitzness of $\varphi$ is simply assumed.
We use the compactness of $\Xi$ to take a finite covering and obtain a finite representative points $\{ \xi_{n,i} \}$,
and to guarantee the existence of maximum $\sup_{\xi \in \Xi}\|\varphi(\xi)\|_E$ (with uniform continuity of $\varphi$).
We use the metric of $\Xi$ to assume and use the Lipschitzness of $\varphi$ to show the a.e. convergence.
We use the finiteness of $\vol(\Xi)$ to use the bounded convergence theorem instead of dominated convergence theorem,
and to show the Bochner integrability of discretized models.
We do not need to assume the Bochner integrability of $\varphi$ as it is a consequence.
\end{remark}

\end{theorem}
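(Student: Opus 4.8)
The plan is to realize the finite network $\fnn_n$ as a Riemann-type sum of the Bochner integral that defines $\nn$, namely as the integral of a simple $E$-valued approximant of $\varphi$, and then to push convergence through the bounded convergence theorem (\cref{lem:bct}). The entire argument lives in the Banach space $E=C(X;Y)$, so that ``uniform over $X$'' convergence is just norm convergence in $E$; this is the conceptual point that renders the non-compact, measure-free space $X$ harmless. I would first establish existence of $\nn$ as a $C(X;Y)$-valued Bochner integral. Since $\Xi$ is a compact metric space it is separable, and $\varphi$, being Lipschitz, is continuous; hence $\varphi(\Xi)\subset E$ is separable and $\varphi$ is strongly measurable by Pettis's theorem. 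Moreover $\xi\mapsto\|\varphi(\xi)\|_E$ is continuous on the compact set $\Xi$, so it attains a finite maximum $M:=\sup_{\xi\in\Xi}\|\varphi(\xi)\|_E<\infty$, and since $\mu(\Xi)<\infty$ we get $\int_\Xi\|\varphi(\xi)\|_E\dd\xi\le M\,\mu(\Xi)<\infty$. By the Bochner integrability criterion quoted above, $\varphi$ is Bochner integrable and $\nn=\int_\Xi\varphi(\xi)\dd\xi\in E$ exists.

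Next I would discretize. For each $n$ fix $\delta_n\downarrow 0$; by compactness $\Xi$ admits a finite cover by open balls of radius $\delta_n$, which I convert into a finite Borel partition $\{\Xi_{n,i}\}_{i=1}^{N_n}$ with $\mathrm{diam}(\Xi_{n,i})\le 2\delta_n$ via the standard disjointification $\Xi_{n,i}:=B_{n,i}\setminus\bigcup_{j<i}B_{n,j}$ (discarding empties). Choosing representatives $\xi_{n,i}\in\Xi_{n,i}$ and setting $w_{n,i}:=\gamma(\xi_{n,i})\,\mu(\Xi_{n,i})$, the finite model becomes
\begin{align*}
\fnn_n=\sum_{i=1}^{N_n}w_{n,i}\phi(\bullet,\xi_{n,i})=\sum_{i=1}^{N_n}\varphi(\xi_{n,i})\,\mu(\Xi_{n,i})=\int_\Xi\varphi_n(\xi)\dd\xi,
\end{align*}
where $\varphi_n:=\sum_i\varphi(\xi_{n,i})\mathbf{1}_{\Xi_{n,i}}$ is a simple $E$-valued function, each $\varphi_n$ being Bochner integrable because $\mu$ is finite. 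The swapped case $\gamma:\Xi\to Y$ is identical, with $w_{n,i}\in Y$.

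Finally I would prove convergence. For $\xi\in\Xi_{n,i}$, Lipschitzness gives $\|\varphi_n(\xi)-\varphi(\xi)\|_E=\|\varphi(\xi_{n,i})-\varphi(\xi)\|_E\le L_\varphi\,d_\Xi(\xi_{n,i},\xi)\le 2L_\varphi\delta_n$, so $\varphi_n\to\varphi$ uniformly in $\xi$, in particular almost everywhere, while $\|\varphi_n(\xi)\|_E\le M$ uniformly. Since $\mu(\Xi)<\infty$, \cref{lem:bct} applies and yields
\begin{align*}
\|\nn-\fnn_n\|_{C(X;Y)}=\Bigl\|\int_\Xi(\varphi-\varphi_n)\dd\xi\Bigr\|_E\le\int_\Xi\|\varphi-\varphi_n\|_E\dd\xi\longrightarrow 0,
\end{align*}
which is the claim. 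One could equally bound the right-hand side directly by $2L_\varphi\delta_n\,\mu(\Xi)$, bypassing the convergence theorem altogether.

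The argument is essentially routine; the only points demanding care are (i) verifying strong measurability and the norm bound needed for Bochner integrability, which rest on continuity of $\varphi$ together with separability and compactness of $\Xi$, and (ii) converting a finite open cover into an honest measurable partition of controlled diameter. The genuine crux is conceptual rather than technical: all estimates are carried out in $E=C(X;Y)$, so a single Lipschitz constant $L_\varphi$ controls the error simultaneously at every $x\in X$. This is precisely why neither a measure nor compactness on $X$ is required, and why the continuous model can be uniformly approximated by finite-width networks.
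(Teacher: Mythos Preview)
Your proposal is correct and follows essentially the same route as the paper: build simple $E$-valued approximants $\varphi_n=\sum_i\varphi(\xi_{n,i})\mathbf{1}_{\Xi_{n,i}}$ from a finite partition of $\Xi$ with small diameter, use Lipschitzness for pointwise (in fact uniform) convergence, the compactness bound $\sup_\xi\|\varphi(\xi)\|_E<\infty$ for uniform domination, and then invoke the bounded convergence theorem for Bochner integrals. Your write-up is slightly more explicit than the paper's in justifying strong measurability (via Pettis) and in constructing the partition by disjointifying a finite ball cover, and you additionally note the direct estimate $\|\nn-\fnn_n\|_E\le 2L_\varphi\delta_n\,\mu(\Xi)$ that bypasses \cref{lem:bct}; the paper instead obtains Bochner integrability of $\varphi$ only as a by-product of the convergence theorem.
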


\begin{proof}
Let $\calA_n$ be a sequence of finite disjoint covering of $\Xi$ with diameter at most $1/n$. Namely each $\calA_n$ is a finite family $\{A_{n,i} \mid i \in [N_n]\}$ (with cardinarity $N_n < \infty$) of measurable sets satisfying $\Xi = \bigsqcup_{i=1}^{N_n} A_{n,i}$ and $\sup_{\xi,\xi' \in A_{n,i}} d_\Xi(\xi,\xi') \le 1/n$.

Put a sequence $\{\varphi_n : \Xi \to E \mid n \in \NN\}$ of functions: For each $n$,
\begin{align*}
\varphi_n(\xi) := \sum_{i=1}^{N_n} \chi_{A_{n,i}}(\xi) \varphi(\xi_{n,i}), \quad \xi \in \Xi.
\end{align*}
Then, 
(1) the sequence is uniformly bounded: For any $n$,
\begin{align*}
    \| \varphi_n(\xi) \|_E
    = \Bigg\| \sum_{i=1}^{N_n} \chi_{A_{n,i}}(\xi) \varphi(\xi_{n,i}) \Bigg\|_E
    \le \sum_{i=1}^{N_n} \chi_{A_{n,i}}(\xi) \big\| \varphi(\xi_{n,i}) \big\|_E
    \le \sup_{\xi \in \Xi} \| \varphi(\xi) \|_E < \infty.
\end{align*}
The last maximum exists because $\Xi$ is compact and $\varphi$ is uniformly continuous.
As a consequence, (2) each function is Bochner integrable: For any $n$,
\begin{align*}
    \int_\Xi \| \varphi_n(\xi) \|_E \dd \xi
    \le
    \vol(\Xi) \sup_{\xi \in \Xi} \big\| \varphi(\xi) \big\|_E < \infty.
\end{align*}
Finally, (3) it a.e.-converges to $\varphi$: At almost every $\xi \in \Xi$, there uniquely exists $A_{n,i}$ including $\xi$, so
\begin{align*}
    \|\varphi(\xi) - \varphi_n(\xi)\|_E
    = 
    \|\varphi(\xi) - \varphi(\xi_{n,i})\|_E
    \le 
    L_\varphi \|\xi - \xi_{n,i}\|_E
    = L_\varphi/n \to 0, \quad n \to \infty.
\end{align*}
Hence, by the bounded convergence theorem (\cref{lem:bct}), $\varphi$ is Bochner integrable and 
\begin{align*}
    &\int_\Xi \varphi_n(\xi) \dd\xi
    = \int_\Xi \sum_{i=1}^{N_n} \chi_{A_{n,i}}(\xi) \varphi(\xi_{n,i}) \dd\xi
    = \sum_{i=1}^{N_n} \vol(A_{n,i}) \gamma(\xi_{n,i}) \phi(\bullet,\xi_{n,i})\\
    &\qquad \to \int_\Xi \varphi(\xi) \dd \xi = \int_\Xi \gamma(\xi) \phi(\bullet,\xi)\dd\xi, \quad \mbox{as } n \to \infty.
\end{align*}
In other words, putting $w_{n,i} = \vol(A_{n,i}) \gamma(\xi_{n,i})$ for short, we have shown
\begin{align*}
    \lim_{n \to \infty} \Bigg\| \sum_{i=1}^{N_n} w_{n,i} \phi(\bullet,\xi_{n,i}) - \int_\Xi \gamma(\xi) \phi(\bullet,\xi)\dd\xi \Bigg\|_{C(X;Y)} = 0. %
    \mbox{\qedhere}
\end{align*}
\end{proof}

\paragraph{Notes.}
In the proof, we only used the existence of the partition with diameter at most $\eps$ by the compactness.
Once $\gamma$ and $\phi$ are specified, we may make use of the structures of them in a computable and effective manner.
For example, a better discretization scheme is investigated in the so-called \emph{Maurey-Jones-Barron (MJB) theory} to achieve the dimension independent \emph{Barron's bound} \citep[see, e.g.,][]{kainen.survey}.

\section{Boundedness of $\lm \circ \ridge$} \label{sec:bddness}
We present several sufficient conditions for the $L^2$-boundedness of $\lm \circ \ridge$.

\begin{lemma}
For any $f \in L^2(X;Y)$, put
\begin{align*}
T[f](x) := \lm \circ \ridge [f](x) = \int_\Xi \left[ \int_X f(y) \overline{\psi(y,\xi)} \dd y \right] \phi(x,\xi) \dd \xi, \quad x \in X.
\end{align*}
The $T$ is bounded, namely there exists a constant $C_T$ such that for any $f \in L^2(X;Y)$,
\begin{align*}
    \| T[f] \|_{L^2(X;Y)} \le C_T \| f \|_{L^2(X;Y)},
\end{align*}
when at least one of the following conditions hold:
\begin{itemize}
    \item[T1.] The integral kernel $k(x,y) := \int_\Xi \psi(y,\xi) \overline{\phi(x,\xi)} \dd \xi$ is square integrable.
    \item[T2.] The two feature maps $\phi,\psi$ are square integrable.
    \item[T3.] For some topological vector space $\Gamma$, both $\ridge : L^2(X) \to \Gamma$ and $\lm : \Gamma \to L^2(X)$ are bounded.
\end{itemize}
\end{lemma}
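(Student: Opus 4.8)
The plan is to treat $T = \lm_\phi \circ \ridge_\psi$ as a single integral operator on $L^2(X;Y)$ and to reduce all three conditions to the Hilbert--Schmidt criterion. First I would apply Fubini's theorem to interchange the two integrations and write
\[
T[f](x) = \int_X K(x,y)[f(y)]\, \dd y, \qquad K(x,y)[v] := \int_\Xi \iprod{v, \psi(y,\xi)}_Y\, \phi(x,\xi)\, \dd\xi,
\]
so that $T$ becomes an integral operator with operator-valued kernel $K(x,y) \in \mathcal{B}(Y)$, which in the scalar case coincides (up to the conjugation convention) with the kernel $k(x,y)$ of the statement. The key structural observation is that $K(x,y)$ is the $\Xi$-integral of the rank-one operators $v \mapsto \iprod{v,\psi(y,\xi)}_Y\, \phi(x,\xi)$, each of Hilbert--Schmidt norm $\|\phi(x,\xi)\|_Y\,\|\psi(y,\xi)\|_Y$.

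Under condition T1 the kernel is square integrable, i.e. $K \in L^2\bigl(X\times X;\mathrm{HS}(Y)\bigr)$, and then $T$ is a Hilbert--Schmidt operator, hence bounded. The estimate $\|T[f]\|_{L^2(X;Y)} \le \|K\|_{L^2}\,\|f\|_{L^2(X;Y)}$ is the standard one: apply Cauchy--Schwarz to $\int_X K(x,y)[f(y)]\,\dd y$ pointwise in $x$ and integrate in $x$, giving $C_T = \|K\|_{L^2(X\times X;\mathrm{HS}(Y))}$.

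Under condition T2 I would show that square integrability of the feature maps forces T1, so this case reduces to the previous one. Using the triangle inequality for the rank-one decomposition of $K$ together with Cauchy--Schwarz in $\xi$,
\[
\|K(x,y)\|_{\mathrm{HS}(Y)} \le \int_\Xi \|\phi(x,\xi)\|_Y\,\|\psi(y,\xi)\|_Y\, \dd\xi \le \left(\int_\Xi \|\phi(x,\xi)\|_Y^2\, \dd\xi\right)^{1/2}\left(\int_\Xi \|\psi(y,\xi)\|_Y^2\, \dd\xi\right)^{1/2}.
\]
Integrating the square of this bound over $X \times X$ factorizes, since the $x$- and $y$-integrations are independent, into $\|\phi\|_{L^2(X\times\Xi;Y)}^2\,\|\psi\|_{L^2(X\times\Xi;Y)}^2 < \infty$, which is exactly the hypothesis of T1. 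Hence $\mathrm{T2} \Rightarrow \mathrm{T1} \Rightarrow$ boundedness.

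Condition T3 is immediate: $T = \lm_\phi \circ \ridge_\psi$ is a composition of two bounded linear maps through the intermediate space $\Gamma$, hence bounded, with $C_T$ at most the product of their operator norms. The only genuinely delicate point, and what I expect to be the main obstacle, is the vector-valued bookkeeping underlying the first two cases: I must justify the Fubini interchange (strong measurability of the integrands and absolute integrability, both of which T1 and T2 supply) and verify that the Hilbert--Schmidt theory of integral operators transfers from the scalar to the $Y$-valued setting through the operator-valued kernel $K$. Once the rank-one decomposition of $K(x,y)$ and its Hilbert--Schmidt norm are established, the remaining estimates are routine, so the difficulty is purely the measure-theoretic justification rather than any new inequality.
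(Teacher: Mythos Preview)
Your proposal is correct and, for cases T1 and T3, identical to the paper's argument: the Hilbert--Schmidt/Cauchy--Schwarz bound for an $L^2$ kernel, and the composition of bounded maps, respectively.

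The only difference is in T2. You reduce T2 to T1 by bounding $\|K(x,y)\|$ via Cauchy--Schwarz in $\xi$ and then integrating over $X\times X$. The paper instead treats T2 directly by factoring through $L^2(\Xi)$: first $\|\lm_\phi[\gamma]\|_{L^2(X)} \le \|\phi\|_{L^2(X\times\Xi)}\,\|\gamma\|_{L^2(\Xi)}$, then $\|\ridge_\psi[f]\|_{L^2(\Xi)} \le \|\psi\|_{L^2(X\times\Xi)}\,\|f\|_{L^2(X)}$, each step by Cauchy--Schwarz in the integration variable. In effect the paper's T2 argument is an instance of T3 with $\Gamma = L^2(\Xi)$, whereas yours is an instance of T1; both routes yield the same constant $C_T = \|\phi\|_{L^2(X\times\Xi)}\,\|\psi\|_{L^2(X\times\Xi)}$. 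Your treatment of the vector-valued $Y$ via operator-valued kernels and Hilbert--Schmidt norms is more careful than the paper's, which silently works in the scalar case.
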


\begin{proof}%
Case~T1.
\begin{align*}
    \| T[f] \|_{L^2(X)}^2
    &= \int_X \bigg| \int_\Xi \left[ \int_X f(y) \overline{\psi(y,\xi)} \dd y \right] \phi(x,\xi) \dd \xi \bigg|^2 \dd x\\
    &= \int_X \bigg| \int_X f(y) \left[ \int_\Xi \overline{\psi(y,\xi)} \phi(x,\xi) \dd \xi \right] \dd y \bigg|^2 \dd x\\
    &= \int_X \bigg| \int_X f(y) \overline{k(x,y)} \dd y \bigg|^2 \dd x\\
    &\le \left[ \int_X |f(y)|^2 \dd y \right] \left[ \int_{X \times X} |k(x,y)|^2 \dd y \dd x \right]
    = \| f \|_{L^2(X)}^2 \| k \|_{L^2(X \times X)}^2.
\end{align*}
Case~T2.
\begin{align*}
    \| T[f] \|_{L^2(X)}^2
    &= \int_X \bigg| \int_\Xi \ridge[f](\xi) \phi(x,\xi) \dd \xi \bigg|^2 \dd x\\
    &\le \int_X \int_\Xi |\ridge[f](\xi) \phi(x,\xi)|^2 \dd \xi \dd x\\
    &\le \| \phi \|_{L^2(X,\Xi)}^2 \int_\Xi \bigg| \int_X f(y) \overline{\psi(y,\xi)} \dd y \bigg|^2 \dd \xi\\
    &\le \| \phi \|_{L^2(X,\Xi)}^2 \int_\Xi \int_X |f(y) \overline{\psi(y,\xi)}|^2 \dd y \dd \xi\\
    &\le \| \phi \|_{L^2(X,\Xi)}^2 \| \psi \|_{L^2(X,\Xi)}^2 \|f \|_{L^2(X)}^2
\end{align*}
Case~T3.
\begin{align*}
    \| T[f] \|_{L^2(X)}
    &= \| \lm \circ \ridge[f] \|_{L^2(X)}
    \le C_{\lm} \| \ridge[f] \|_{\Gamma}
    \le C_{\lm} C_{\ridge} \|f\|_{L^2(X)}
\end{align*}
\end{proof}

\begin{lemma}
    Suppose $\phi_2 : X_2 \times \Xi_2 \to \RR$ is square integrable, and $\phi_1 : X_1 \times \Xi_1 \to X_2$ has inverse jacobian $J_1(\xi_1,y) := |\det D[\phi_1](\xi_1,\phi_1^{-1}(y))|^{-1}$ satisfying $\| J_1 \|_{L^1(\Xi_1) \otimes L^\infty(Y)} < \infty$. Then, $\phi_2 \circ \phi_1 : X_1 \times \Xi_{1:2} \to \RR$ is square integrable.
\end{lemma}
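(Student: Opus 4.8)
The plan is to reduce the claim to a change of variables in the $X_1$-integral with the parameters held fixed, and then to separate out the $\Xi_1$-integration using the assumed mixed norm on the inverse Jacobian. Reading the composite feature map as $(\phi_2\circ\phi_1)(x,(\xi_1,\xi_2)) := \phi_2(\phi_1(x,\xi_1),\xi_2)$, the object to control is
\begin{align*}
\| \phi_2\circ\phi_1 \|_{L^2(X_1\times\Xi_{1:2})}^2
= \int_{\Xi_2}\int_{\Xi_1}\int_{X_1} |\phi_2(\phi_1(x,\xi_1),\xi_2)|^2 \dd x \dd \xi_1 \dd \xi_2,
\end{align*}
where the displayed ordering of the iterated integrals is legitimate because the integrand is nonnegative and Tonelli's theorem applies.

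First I would fix $\xi_1$ and $\xi_2$ and substitute $y = \phi_1(x,\xi_1)$ in the innermost integral. Since $\phi_1(\bullet,\xi_1):X_1\to X_2$ is assumed invertible with Jacobian $D[\phi_1](\xi_1,\bullet)$, the change-of-variables formula gives $\dd x = |\det D[\phi_1](\xi_1,\phi_1^{-1}(y))|^{-1}\dd y = J_1(\xi_1,y)\dd y$, so
\begin{align*}
\int_{X_1} |\phi_2(\phi_1(x,\xi_1),\xi_2)|^2 \dd x
= \int_{X_2} |\phi_2(y,\xi_2)|^2\, J_1(\xi_1,y)\dd y.
\end{align*}
Next I would bound the Jacobian factor by its essential supremum in $y$ and carry the $\Xi_1$-integration onto $J_1$ alone. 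Because $J_1(\xi_1,y)\le \esssup_{y} J_1(\xi_1,y)$ and the $X_2$-integral of $|\phi_2|^2$ does not depend on $\xi_1$, integrating over $\xi_1$ yields
\begin{align*}
\int_{\Xi_1}\int_{X_2} |\phi_2(y,\xi_2)|^2\, J_1(\xi_1,y)\dd y \dd \xi_1
\le \| J_1 \|_{L^1(\Xi_1)\otimes L^\infty(X_2)} \int_{X_2} |\phi_2(y,\xi_2)|^2 \dd y.
\end{align*}

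A final integration over $\xi_2$ identifies the remaining factor as $\|\phi_2\|_{L^2(X_2\times\Xi_2)}^2$, giving the closed bound $\|\phi_2\circ\phi_1\|_{L^2}^2 \le \| J_1 \|_{L^1(\Xi_1)\otimes L^\infty(X_2)}\,\|\phi_2\|_{L^2(X_2\times\Xi_2)}^2$, which is finite by the two hypotheses (here $X_2$ is the image space the statement denotes $Y$). The main obstacle is the rigorous justification of the substitution: one needs $\phi_1(\bullet,\xi_1)$ to be a $C^1$-diffeomorphism (or at least an a.e.-invertible Lipschitz map to which the area formula applies) for almost every $\xi_1$, with nonvanishing Jacobian so that $J_1$ is well defined, and one must verify joint measurability of $(\xi_1,y)\mapsto J_1(\xi_1,y)$ so that both the mixed norm and the appeal to Tonelli are meaningful. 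Everything after that is nonnegative Fubini--Tonelli bookkeeping together with the two supplied finiteness assumptions.
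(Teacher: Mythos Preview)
Your argument is correct and follows essentially the same route as the paper: change variables $y=\phi_1(x,\xi_1)$ in the inner integral, bound $J_1(\xi_1,y)$ by its sup in $y$, and then factor the remaining integrals into $\|J_1\|_{L^1(\Xi_1)\otimes L^\infty(Y)}\,\|\phi_2\|_{L^2(X_2\times\Xi_2)}^2$. The only cosmetic difference is that the paper tracks the image $\phi_1(\xi_1,X_1)$ explicitly (setting $Y:=\bigcup_{\xi_1}\phi_1(\xi_1,X_1)$) before enlarging to the full $X_2$-integral, whereas you jump straight to $X_2$; since the integrand is nonnegative this is harmless.
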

\begin{remark}
When input and output dimensions differ, replace the jacobian $J_1$ with $\sqrt{J_1 \circ J_1^\top}$. See \citet{Evans2015} for more details.
\end{remark}
\begin{proof}
    For almost every $\xi_1 \in \Xi_1$, put $x = \phi_1^{-1}(\xi_1,y)$, 
    so $\dd x = J_1(\xi_1,y) \dd y$ with $J_1(\xi_1,y) := |\det D[\phi_1](\xi_1,\phi_1^{-1}(y))|^{-1}$.
    Put $Y := \bigcup_{\xi_1 \in \Xi_1} \phi_1(\xi_1, X)$.
    Then,
    \begin{align*}
        \int_{X \times \Xi_{1:2}} | \phi_2( \xi_2, \phi_1( \xi_1, x ) ) |^2 \dd x \dd \xi_{1:2}
        &=\int_{\Xi_{1:2}} \int_{\phi_1(\xi_1,X)} | \phi_2( \xi_2, y ) |^2 J_1(\xi_1,y) \dd y \dd \xi_{1:2}\\
        &\le \int_{\Xi_{1:2}} \left[\int_{\phi_1(\xi_1,X)} | \phi_2( \xi_2, y ) |^2 \dd y \right] \left[ \sup_{y \in \phi_1(\xi_1,X)} J_1(\xi_1,y) \right] \dd \xi_{1:2}\\
        &\le \int_{\Xi_{1:2}} \left[\int_Y | \phi_2( \xi_2, y ) |^2 \dd y \right] \left[ \sup_{y \in Y} J_1(\xi_1,y) \right] \dd \xi_{1:2}\\
        &= \| \phi_2 \|_{L^2(\Xi_2 \times Y)}^2 \| J_1 \|_{L^1(\Xi_1) \otimes L^\infty(Y)}. \qedhere
    \end{align*}
\end{proof}

\section{Irreducibility of the infinite-dimensional representation of the Affine group} \label{sec:irrep.affine}

We refer to \citet[Theorem~6.42]{Folland2015} for more details on the infinite-dimensional irreducible representations. The following proofs are attributed to \citet{KobayashiOshima.LieRep.book}, a Japanese textbook.

\begin{lemma}[{\citet[Lemma~2.8]{KobayashiOshima.LieRep.book}}]\label{lem:structure.translation.l2}
Let $T = \RR^m$ be the translation group in dimension $m$,
and $\tau : T \to \calU(L^2(\RR^m))$ be the unitary representation of $T$ defined by the regular action.
A closed subspace $V$ of $L^2(\RR^m)$ is $\tau$-invariant iff there exists a measurable set $E$ of $\RR^m$ such that 
\begin{align*}
    V = \calF^{-1}[ L^2(E) ].
\end{align*}
\end{lemma}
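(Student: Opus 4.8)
The plan is to diagonalize the translation representation with the Fourier transform, reducing the claim to a statement about subspaces invariant under multiplication by characters, and then to identify the orthogonal projection onto such a subspace as multiplication by an indicator function. First I would recall that $\calF$ is a unitary operator on $L^2(\RR^m)$ intertwining translation with modulation, $\calF\tau_t\calF^{-1} = M_{e_t}$, where $e_t(\xi):=e^{-2\pi i t\cdot\xi}$ and $M_h$ denotes the multiplication operator $g\mapsto hg$. Hence a closed subspace $V$ is $\tau$-invariant iff $W:=\calF[V]$ is invariant under every $M_{e_t}$, and since $V=\calF^{-1}[L^2(E)]$ exactly when $W=L^2(E)$, it suffices to show that a closed subspace $W\subseteq L^2(\RR^m)$ is invariant under all modulations iff $W=L^2(E)$ for some measurable $E$. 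The forward (``if'') implication is immediate, since each $M_{e_t}$ preserves the essential support of a function and therefore maps $L^2(E)$ into itself.

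For the converse, let $P$ be the orthogonal projection onto $W$. As each $M_{e_t}$ is unitary and $W$ is invariant, $W^\perp$ is invariant as well, so $P$ commutes with every $M_{e_t}$. The heart of the argument is to prove that $P$ is itself a multiplication operator. I would fix $\psi\in L^2(\RR^m)$ with $\psi\neq0$ almost everywhere (a Gaussian, say), which is cyclic: if $g\perp e_t\psi$ for all $t$, then the Fourier transform of $g\overline{\psi}\in L^1(\RR^m)$ vanishes identically, forcing $g\overline{\psi}=0$ and hence $g=0$ a.e. Setting $\eta:=P\psi$ and $h:=\eta/\psi$ (well defined a.e.\ since $\psi\neq0$ a.e.), for any trigonometric polynomial $p=\sum_j c_j e_{t_j}$ the commutation relation gives $P(p\psi)=\sum_j c_j M_{e_{t_j}}P\psi = p\,\eta = h\,(p\psi)$. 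Thus $P$ agrees with $M_h$ on the dense subspace $\operatorname{span}\{e_t\psi : t\in\RR^m\}$.

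The main obstacle is to upgrade this pointwise identity to the operator identity $P=M_h$, which amounts to showing $h\in L^\infty$. Here I would exploit that $P$ is a contraction: $\int|h|^2|p\psi|^2\le\int|p\psi|^2$ for every trigonometric polynomial $p$. Since the moduli $|p|^2$ are nonnegative trigonometric polynomials and these approximate arbitrary nonnegative continuous weights on compact sets (Stone--Weierstrass / Fej\'er--Riesz), a limiting argument yields $\int_A|h|^2|\psi|^2\le\int_A|\psi|^2$ for every measurable $A$, and as $|\psi|^2>0$ a.e.\ this forces $|h|\le1$ almost everywhere. Consequently $M_h$ is bounded and coincides with $P$ on a dense set, so $P=M_h$. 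Finally $P=P^*=P^2$ translates into $h=\overline{h}$ and $h^2=h$, whence $h=\chi_E$ for some measurable set $E$; therefore $W=\im P = L^2(E)$ and $V=\calF^{-1}[L^2(E)]$, as required.
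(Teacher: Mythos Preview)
Your argument is correct and follows a genuinely different route from the paper's proof. The paper does not pass through the orthogonal projection at all: after conjugating by $\calF$, it observes that for $f\in V$ and $g\in V^\perp$ the convolution $f*g^*$ vanishes, hence $\fhat\,\overline{\ghat}=0$ a.e.; it then builds the set $E$ directly by an exhaustion argument, taking a sequence $E_j\subset\supp\fhat_j$ whose Gaussian measures $\mu(E_j)$ approach the supremum over all such sets, setting $F=\bigcup_j E_j$, and checking the double inclusion $V=\calF^{-1}[L^2(F)]$ by hand. Your approach instead identifies the projection $P$ onto $\calF[V]$ as a multiplication operator, which is more conceptual and sits naturally in the framework of maximal abelian subalgebras; the paper's argument is more elementary (no operator algebra) but the exhaustion construction is somewhat ad hoc.

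One point in your sketch deserves more care: the passage from $\int|h|^2|p|^2|\psi|^2\le\int|p|^2|\psi|^2$ for trigonometric polynomials $p$ to the localized inequality $\int_A|h|^2|\psi|^2\le\int_A|\psi|^2$. Trigonometric polynomials on $\RR^m$ are almost periodic rather than compactly supported, so uniform approximation of $\chi_A$ on a compact set does not immediately control the integrals over all of $\RR^m$. This can be repaired either by exploiting the Gaussian decay of $|\psi|^2$ together with a periodization trick, or more cleanly by noting that $\operatorname{span}\{e_t\}$ is weak-$*$ dense in $L^\infty(\RR^m)$ (its $L^1$-annihilator is zero by Fourier uniqueness), whence $P$ commutes with every $M_\phi$, $\phi\in L^\infty$, and the maximal-abelian property of $L^\infty$ acting on $L^2$ gives $P=M_h$ directly. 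Either way the gap is easily closed, but the phrase ``a limiting argument'' understates the work needed.
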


\begin{proof}
Let $V^\perp$ denote the orthocomplement of $V$.
Since $V$ is $\tau$-invariant, for all $f \in V$ and $g \in V^\perp$,
\begin{align*}
    f * g^*(x)
    = \int_{\RR^m} f(x-y) g^*(y) \dd y
    = \int_{\RR^m} f(x+y) \overline{g(y)} \dd y
    = \iprod{ \tau_{-x}[f], g } = 0.
\end{align*}
Therefore, for all $f \in V$ and $g \in V^\perp$,
\begin{align}
\fhat(\xi) \overline{\ghat(\xi)} = 0, \quad \almost \xi \in \RR^m. \label{eq:ortho.f.g}
\end{align}
Let $\mu$ be the Gaussian measure on $\RR^m$ defined by $\dd\mu(\xi) := \exp(-|\xi|^2)\dd\xi$. Put
\begin{align*}
    \calE(V) := \bigcup_{f \in V} \{ E \subset \RR^m \mbox{ measurable} \mid E \subset \supp \fhat \},
\end{align*}
and 
\begin{align*}
    A := \sup_{E \in \calE(V)} \mu(E) \le \mu(\RR^m) < \infty.
\end{align*}
Let $E_j$ be a sequence of measurable sets of $\calE(V)$ satisfying $\lim_{j\to\infty} \mu(E_j) = A$, and set $F := \bigcup_j E_j$. Then $F$ is measurable, and $\mu(F) \ge A$ because $\limsup \mu(E_j) \le \mu( \limsup E_j )$ in general.
By \cref{eq:ortho.f.g}, $\ghat(\xi) = 0$ at $\almost \xi \in E_j$ for all $j$ and $g \in V^\perp$. Hence
\begin{align*}
    V^\perp \subset \calF^{-1}\left[ L^2( \RR^m \setminus F ) \right],
\end{align*}
thus
\begin{align*}
    V \supset \left( \calF^{-1}\left[ L^2( \RR^m \setminus F)\right] \right)^\perp \supset \calF^{-1}\left[ L^2( F ) \right].
\end{align*}
Conversely, suppose that there exist an $f \in V$ and a measurable subset $F' \subset \RR^m \setminus F$ satisfying $\fhat(\xi) \neq 0 \almost \xi \in F'$.
Then $\ghat(\xi) = 0$ at $\almost \xi \in F'$ for all $g \in V^\perp$, and we have
\begin{align*}
V^\perp \subset \calF^{-1}\left[ L^2( \RR^m \setminus(F \cup F'))\right],
\end{align*}
thus
\begin{align*}
V \supset \left( \calF^{-1}\left[ L^2( \RR^m \setminus(F \cup F'))\right] \right)^\perp  \supset \calF^{-1}\left[ L^2( F \cup F')\right],
\end{align*}
which means $F \cup F' \in \calE(V)$. By the definition of $A$, $A \ge \mu(F \cup F') = \mu(F) + \mu(F')$. On the other hand, $\mu(F) \ge A$. So, $\mu(F') = 0$, but this implies $\fhat(\xi) = 0 \almost \xi \in \RR^m \setminus F$. Therefore
\begin{align*}
    V \subset \calF^{-1}\left[ L^2(F) \right],
\end{align*}
thus 
\begin{align*}
    V = \calF^{-1}\left[ L^2(F) \right].
\end{align*}
\end{proof}

\begin{theorem}[{\citet[Theorem~2.10]{KobayashiOshima.LieRep.book}}]%
Let $G = \aff(\RR^m) \cong GL(m,\RR) \ltimes \RR^m$ be the affine group.
For any $\lambda \in \CC$, let $\pi$ be the representation of $G$ on $L^2(\RR^m)$ defined by
\begin{align*}
    \pi_g[f](x) := |\det A|^{-\lambda} f(g^{-1} \cdot x), \quad x \in \RR^m, g = (A,b) \in GL(m,\RR) \ltimes \RR^m.
\end{align*}
Additionaly, let $\pihat$ be another representation of $G$ on $L^2(\RR^m)$ defined by
\begin{align*}
    \pihat_g[h](\xi) := e^{i b \cdot \xi} |\det A|^{1-\lambda} h(A^\top \xi), \quad \xi \in \RR^m, g = (A,b) \in GL(m,\RR) \ltimes \RR^m
\end{align*}
so that $\calF$ intertwines $\pi$ and $\pihat$, namely $\calF \circ \pi_g = \pihat_g \circ \calF$ for all $g \in G$.
Suppose $\lambda \in \frac{1}{2} + i\RR$, then $\pi$ is an irreducible unitary representation of $G$ on $L^2(\RR^m)$.
\end{theorem}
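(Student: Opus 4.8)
The plan is to prove unitarity by a direct change of variables and then reduce irreducibility to the structure theorem for translation-invariant subspaces, \cref{lem:structure.translation.l2}, combined with an ergodicity argument for the linear $GL(m)$-action.

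First I would verify that $\pi$ is unitary. Writing $\lambda = \tfrac12 + it$ with $t \in \RR$, we have $\bigl||\det A|^{-\lambda}\bigr|^2 = |\det A|^{-2\Re\lambda} = |\det A|^{-1}$, so for $g=(A,b)$ the substitution $y = A^{-1}(x-b)$, with $\dd x = |\det A|\,\dd y$, gives $\|\pi_g[f]\|_{L^2}^2 = |\det A|^{-1}\int_{\RR^m}|f(y)|^2\,|\det A|\,\dd y = \|f\|_{L^2}^2$. The homomorphism property and strong continuity are routine, so $\pi$ is a unitary representation. Since $\calF$ is a unitary isomorphism (Plancherel) and $\calF\circ\pi_g=\pihat_g\circ\calF$, closed $\pi$-invariant subspaces correspond bijectively to closed $\pihat$-invariant ones, which I will use to compute the dilation action concretely on the Fourier side.

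Next, let $V\subset L^2(\RR^m)$ be a nonzero closed $\pi$-invariant subspace; the goal is $V = L^2(\RR^m)$. The key observation is that the translation subgroup $\{(I,b):b\in\RR^m\}$ acts by $\pi_{(I,b)}[f](x)=f(x-b)$, i.e.\ exactly as the regular representation $\tau$. Hence $V$ is $\tau$-invariant, and \cref{lem:structure.translation.l2} forces $V=\calF^{-1}[L^2(E)]$ for some measurable $E\subset\RR^m$, equivalently $\calF[V]=L^2(E)$. To use the $GL(m)$-part, transport through $\calF$: on $\calF[V]=L^2(E)$ the operator $\pi_{(A,0)}$ acts as $\pihat_{(A,0)}[h](\xi)=|\det A|^{1-\lambda}h(A^\top\xi)$, which carries $L^2(E)$ onto $L^2(A^{-\top}E)$ (the transformed function is supported where $A^\top\xi\in E$). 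Invariance for every $A$ and its inverse then yields $A^{-\top}E=E$ modulo null sets, and since $A\mapsto A^{-\top}$ is a bijection of $GL(m)$, the set $E$ is $GL(m)$-invariant modulo null sets.

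The final and main step is to conclude that such an $E$ is null or co-null. This is precisely the ergodicity of the linear $GL(m)$-action on $\RR^m$ with respect to Lebesgue measure: the action is transitive on $\RR^m\setminus\{0\}$ and Lebesgue measure is quasi-invariant (scaling by $|\det A|$), so every measurable invariant set is null or co-null; concretely, $\chi_E$ is an invariant $L^\infty$-function, hence a.e.\ constant. As $V\neq\{0\}$ makes $E$ non-null, we obtain $E$ co-null and $V=L^2(\RR^m)$, proving irreducibility. I expect this ergodicity to be the crux of the argument: everything preceding it is bookkeeping built on \cref{lem:structure.translation.l2}, whereas ruling out nontrivial invariant sets modulo null sets is where the genuine content lies. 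A self-contained alternative, avoiding a black-box ergodicity citation, would be to use only the scaling subgroup $\{rI\}$ to force $E$ to be a cone modulo null sets and then the rotations $O(m)$ with a radial decomposition; I would nonetheless keep the transitivity argument as the cleaner route.
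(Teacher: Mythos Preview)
Your proposal is correct and follows essentially the same route as the paper's proof: restrict to the translation subgroup, apply \cref{lem:structure.translation.l2} to obtain $V=\calF^{-1}[L^2(E)]$, then use the intertwining with $\pihat$ to see that $E$ is $GL(m)$-invariant modulo null sets, and conclude via ergodicity of the linear $GL(m)$-action on $\RR^m$. The paper is terser at each step (in particular it merely asserts the ergodicity as ``known''), while you spell out the unitarity computation and sketch the ergodicity argument; your optional scaling-plus-rotations alternative is a nice self-contained variant but is not needed to match the paper.
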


\begin{proof}
    Note that $\pi$ is unitary iff $\lambda \in \frac{1}{2} + i \RR$.
    We identify the translation group $T=\RR^m$ as a subgroup of the affine group $G=\aff(\RR^m)$ by embedding $b \in T$ to $(I,b) \in G$.
    Suppose $V \neq 0$ be a $\pi$-invariant closed subspace of $L^2(\RR^m)$. Then it is also $\tau$-invariant because $\pi(I,b) = \tau(b)$. Hence by \cref{lem:structure.translation.l2}, there exists a measurable subset $E$ of $\RR^m$ satisfying $V = \calF^{-1}\left[ 
L^2(E) \right]$. Further, the image $\calF(V) = L^2(\RR^m)$ is $\pihat$-invariant. Therefore by the definition of $\pihat$, $E$ has to be invariant (except a measure zero subset) under the dual action of $A \in GL(m,\RR)$ given by
\begin{align*}
    \xi \mapsto A^{-\top} \xi, \quad \xi \in \RR^m.
\end{align*}
But this action is known to be ergodic, that is, such a measurable set is either $\RR^m$ or $\empty$. Hence $E=\RR^m$ and $V = \calF^{-1}\left[ L^2(E) \right] = L^2(\RR^m)$, which yields the irreducibility of $\pi$.
\end{proof}

\end{document}